\tikzset{cross/.style={cross out, draw, 
         minimum size=2*(#1-\pgflinewidth), 
         inner sep=0pt, outer sep=0pt,line width=1.5pt}}
\newcommand{\bluecircle}{\tikz\draw[blue,fill=blue] (0,0) circle (.5ex);}
\newcommand{\redcross}{\tikz\draw[red] (0,0) node[cross=.7ex]{};}
\newcommand{\Hexataggertt}{{\color{MacroColor}\texttt{Hexa}}}
\newcommand{\Biaffinett}{{\color{MacroColor}\texttt{Biaff}}}
\definecolor{spink}{HTML}{CC6677}
\definecolor{sred}{HTML}{F8CCCE}
\definecolor{syellow}{HTML}{DDCC77}
\definecolor{sblue}{HTML}{88CCEE}
\definecolor{sgreen}{HTML}{44AA99}
\definecolor{sorange}{HTML}{FFE6CC}
\newcommand{\bw}{{\color{MacroColor}\boldsymbol{w}}}
\newcommand{\token}{{\color{MacroColor} w}}
\newcommand{\bzero}{{\color{MacroColor} \boldsymbol{0}}}
\newcommand{\objective}{{\color{MacroColor} \calL}}
\newcommand{\param}{{\color{MacroColor}\boldsymbol{\theta}}}
\DeclareMathOperator*{\argmin}{argmin}
\newcommand{\wx}{{\color{MacroColor} x}}
\newcommand{\wy}{{\color{MacroColor} y}}
\newcommand{\wz}{{\color{MacroColor} z}}
\newcommand{\wxone}{{\color{MacroColor} \wx_1}}
\newcommand{\wxtwo}{{\color{MacroColor} \wx_2}}
\newcommand{\wyone}{{\color{MacroColor} \wy_1}}
\newcommand{\wytwo}{{\color{MacroColor} \wy_2}}
\newcommand{\tagspace}{{\color{MacroColor} \calC}}
\newcommand{\alphabet}{{\color{MacroColor} \Sigma}}
\newcommand{\kleene}[1]{{\color{MacroColor}#1^*}}
\newcommand{\str}{{\color{MacroColor} \bw}}
\newcommand{\VER}{{\color{MacroColor} \left( \vertices, \edges, \relations \right)}}
\newcommand{\VE}{{\color{MacroColor} \left( \vertices, \edges \right)}}
\newcommand{\VERel}{{\color{MacroColor} \left( \vertices, \edges, \rel \right)}}
\newcommand{\realizer}{{\color{MacroColor}{R}}}
\newcommand{\graph}{{\color{MacroColor}\calG}}
\newcommand{\vertices}{{\color{MacroColor} V}}
\newcommand{\verticesr}{{\color{MacroColor} \vertices^{r}}}
\newcommand{\verticesb}{{\color{MacroColor} \vertices^{b}}}
\newcommand{\verticeshat}{{\color{MacroColor} \widehat{\vertices}}}
\newcommand{\source}{{\color{MacroColor} \calN}}
\newcommand{\sink}{{\color{MacroColor} \calM}}
\newcommand{\edges}{{\color{MacroColor} E}}
\newcommand{\edgeshat}{{\color{MacroColor} \widehat{\edges}}}
\newcommand{\relations}{{\color{MacroColor} {R}}}
\newcommand{\poset}{{\color{MacroColor} \calP}}
\newcommand{\posetp}{{\color{MacroColor} \calP_{\textnormal{p}}}}
\newcommand{\posets}{{\color{MacroColor} \calP_{\textnormal{s}}}}
\newcommand{\toset}{{\color{MacroColor} \calT}}
\newcommand{\tosetk}{{\color{MacroColor} \calT_k}}
\newcommand{\R}{{\color{MacroColor} \mathbb{R}}}
\newcommand{\N}{{\color{MacroColor} \mathbb{N}}}
\newcommand{\Nlessthan}[1]{{\color{MacroColor} [#1]}}
\newcommand{\calC}{{\color{MacroColor}\mathcal{C}}}
\newcommand{\calF}{{\color{MacroColor}\mathcal{F}}}
\newcommand{\calG}{{\color{MacroColor}\mathcal{G}}}
\newcommand{\calL}{{\color{MacroColor}\mathcal{L}}}
\newcommand{\calM}{{\color{MacroColor}\mathcal{M}}}
\newcommand{\calN}{{\color{MacroColor}\mathcal{N}}}
\newcommand{\calP}{{\color{MacroColor}\mathcal{P}}}
\newcommand{\calS}{{\color{MacroColor}\mathcal{S}}}
\newcommand{\calT}{{\color{MacroColor}\mathcal{T}}}
\newcommand{\calX}{{\color{MacroColor}\mathcal{X}}}
\newcommand{\seqlen}{{\color{MacroColor} N}}
\newcommand{\pairFunc}{{\color{MacroColor} F_{\param}}} %
\newcommand{\rankfunc}{\color{MacroColor} f_{\param}}
\newcommand{\rankfunck}{{\color{MacroColor} \rankfunc^{(k)}}}
\newcommand{\funcrealizer}{{\color{MacroColor} \boldsymbol{\calF}_{\!\param} }}
\newcommand{\orderdim}{{\color{MacroColor} K}}
\newcommand{\orderdimfunc}{{\color{MacroColor} D}}
\newcommand{\defeq}{\mathrel{\stackrel{\textnormal{\tiny def}}{=}}}
\newcommand{\extreetrav}[0]{%
\small
\begin{tikzpicture}[baseline={(0,0)},x=0.08\linewidth, y=0.08\linewidth, minimum size=14pt][%
\draw[] (1.5, 1.6) node[shape=circle,draw,thick,inner sep=1pt] (NB) {$b$};
\draw[] (2.7, 0.3) node[shape=circle,draw,thick,inner sep=1pt] (NC) {$c$};

\draw[] (3.6, 2.7) node[shape=circle,draw,thick,inner sep=1pt] (ND) {$d$};

\draw[] (0.5, 0.3) node[shape=circle,draw,thick,inner sep=1pt] (NA) {$a$};
\draw[] (5.7, 1.6) node[shape=circle, draw,thick, inner sep=1pt] (NF) {$f$};
\draw[] (4.8, 0.3) node[shape=circle,draw,thick, inner sep=1pt] (NE) {$e$};
\draw[] (6.6, 0.3) node[shape=circle,draw,thick, inner sep=1pt] (NG) {$g$};

\draw[thick] (NA) -- (NB);
\draw[thick] (NB) -- (ND);
\draw[thick] (NC) -- (NB);
\draw[thick] (NE) -- (NF);
\draw[thick] (NF) -- (ND);
\draw[thick] (NF) -- (NG);
\end{tikzpicture}}
\newcommand{\extreetravpoone}[0]{%
\small
\begin{tikzpicture}[baseline={(0,0)},x=0.08\linewidth, y=0.08\linewidth, minimum size=14pt][%
\draw[] (0.5, 0.3) node[shape=circle,draw,thick,inner sep=1pt] (NA) {$a$};
\draw[] (1.5, 1.6) node[shape=circle,draw,thick,inner sep=1pt] (NB) {$b$};
\draw[] (2.7, 0.3) node[shape=circle,draw,thick,inner sep=1pt] (NC) {$c$};
\draw[] (3.6, 2.7) node[shape=circle,draw,thick,inner sep=1pt] (ND) {$d$};

\draw[] (5.7, 1.6) node[shape=circle, draw,thick, inner sep=1pt] (NF) {$f$};
\draw[] (4.8, 0.3) node[shape=circle,draw,thick, inner sep=1pt] (NE) {$e$};
\draw[] (6.6, 0.3) node[shape=circle,draw,thick, inner sep=1pt] (NG) {$g$};

\draw[thick,-stealth] (NA) -- (NB);
\draw[thick,-stealth] (NB) -- (ND);
\draw[thick,-stealth] (NC) -- (ND);
\draw[thick,-stealth] (NE) -- (NF);
\end{tikzpicture}}
\newcommand{\extreetravpotwo}[0]{%
\small
\begin{tikzpicture}[baseline={(0,0)},x=0.08\linewidth, y=0.08\linewidth, minimum size=14pt][%
\draw[] (0.5, 0.3) node[shape=circle,draw,thick,inner sep=1pt] (NA) {$a$};
\draw[] (1.5, 1.6) node[shape=circle,draw,thick,inner sep=1pt] (NB) {$b$};
\draw[] (2.7, 0.3) node[shape=circle,draw,thick,inner sep=1pt] (NC) {$c$};
\draw[] (3.6, 2.7) node[shape=circle,draw,thick,inner sep=1pt] (ND) {$d$};

\draw[] (5.7, 1.6) node[shape=circle, draw,thick, inner sep=1pt] (NF) {$f$};
\draw[] (4.8, 0.3) node[shape=circle,draw,thick, inner sep=1pt] (NE) {$e$};
\draw[] (6.6, 0.3) node[shape=circle,draw,thick, inner sep=1pt] (NG) {$g$};

\draw[thick,-stealth] (NG) -- (NF);
\draw[thick,-stealth] (NF) -- (ND);
\draw[thick,-stealth] (NE) -- (ND);
\draw[thick,-stealth] (NC) -- (NB);
\end{tikzpicture}}
\newcommand{\ttsort}{{\color{MacroColor} \tt{sort}}}
\newcommand{\ttkey}{{\color{MacroColor} \tt{key}}}
\newcommand{\sortedvertices}{{\color{MacroColor} U}}
\DeclareMathOperator{\semiringplus}{{\color{MacroColor} \bigoplus}}
\DeclareMathOperator{\rel}{{\color{MacroColor} \prec }}
\DeclareMathOperator{\norel}{{\color{MacroColor} \nprec }}
\newcommand{\labelrel}{{\color{MacroColor} r}}
\newcommand{\labelrelprime}{{\color{MacroColor} \labelrel' }}
\title{Linear-Time Modeling of Linguistic Structure: \\ An Order-Theoretic Perspective}
\author{Tianyu Liu \qquad Afra Amini \qquad Mrinmaya Sachan \qquad Ryan Cotterell \\
\setlength{\fboxsep}{2.5pt}%
 \setlength{\fboxrule}{2.5pt}%
 \fcolorbox{white}{white}{
     $\{$\texttt{\href{mailto:tianyu.liu@inf.ethz.ch}{tianyu.liu}, }%
     \texttt{\href{mailto:afra.amini@inf.ethz.ch}{afra.amini}, }%
     \texttt{\href{mailto:mrinmaya.sachan@inf.ethz.ch}{mrinmaya.sachan}, }%
    \texttt{\href{mailto:ryan.cotterell@inf.ethz.ch}{ryan.cotterell}}%
     $\}$\texttt{@inf.ethz.ch}
 } \\
     {%
 \setlength{\fboxsep}{2.5pt}%
 \setlength{\fboxrule}{2.5pt}%
 \fcolorbox{white}{white}{
     \includegraphics[width=.15\linewidth]{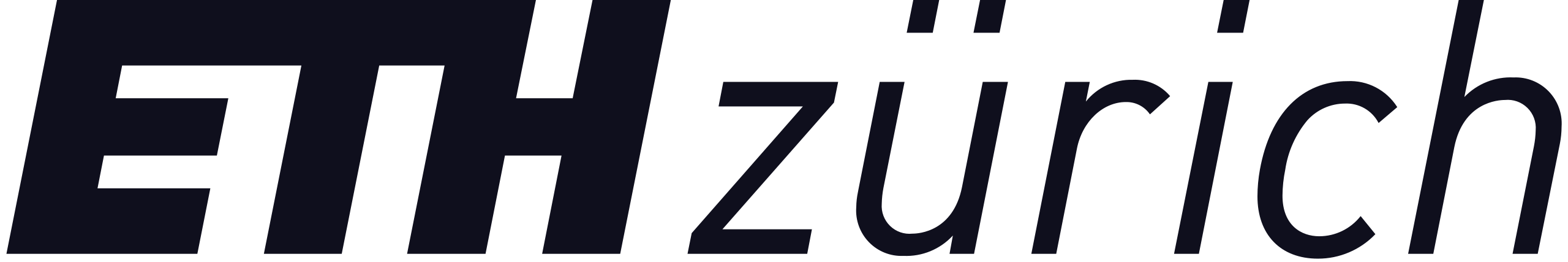}
 }
 }}
\pgfplotsset{compat=1.18}
\begin{document}

\maketitle

\begin{abstract}
Tasks that model the relation between pairs of tokens in a string are a vital part of understanding natural language.
Such tasks, in general, require exhaustive pair-wise comparisons of tokens, thus having a quadratic runtime complexity in the length of the string.
We show that these exhaustive comparisons can be avoided, and, moreover, the complexity of such tasks can be reduced to linear by casting the relation between tokens as a partial order over the string.
Our method predicts real numbers for each token in a string in parallel and sorts the tokens accordingly, resulting in total orders of the tokens in the string.
Each total order implies a set of arcs oriented from smaller to greater tokens, sorted by their predicted numbers.
The intersection of total orders results in a partial order over the set of tokens in the string, which is then decoded into a directed graph representing the desired linguistic structure.
Our experiments on dependency parsing and coreference resolution show that our method achieves state-of-the-art or comparable performance.
Moreover, the linear complexity and parallelism of our method double the speed of graph-based coreference resolution models, and bring a 10-times speed-up over graph-based dependency parsers.
\newline
 \newline
 \vspace{0.2em}
  \hspace{.5em}\includegraphics[width=1.05em,height=1.05em]{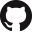}\hspace{.5em}\parbox{\dimexpr\linewidth-2\fboxsep-2\fboxrule}
  {\small \url{https://github.com/lyutyuh/partial}}
\end{abstract}

\section{Introduction}
\def\yfirstline{2}
\def\ysecondline{-1}
\begin{figure*}[t]
\centering
\begin{subfigure}[b]{0.23\textwidth}
    \includegraphics[trim={12mm 3mm 12mm 0},clip,width=\textwidth]{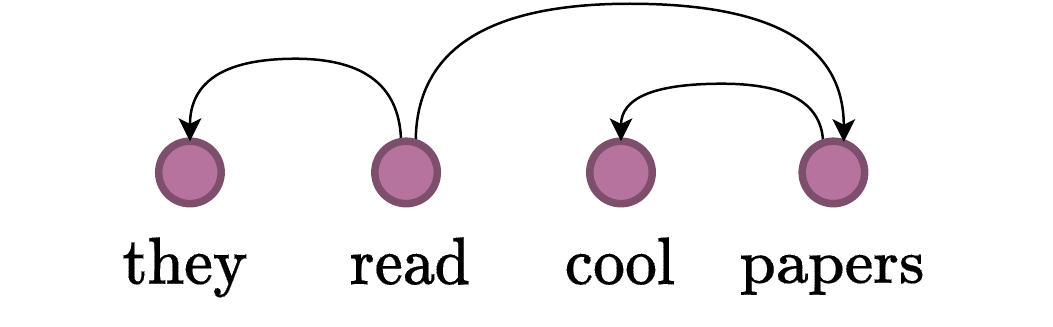}
    \caption{An example dependency structure. The root arc and arc labels are omitted.} \label{fig:illu-dep-structure}
\end{subfigure}\hspace{2mm}
\begin{subfigure}[b]{0.20\textwidth}
     \includegraphics[trim={10mm 11.5mm 10mm 2mm},clip,width=\textwidth]{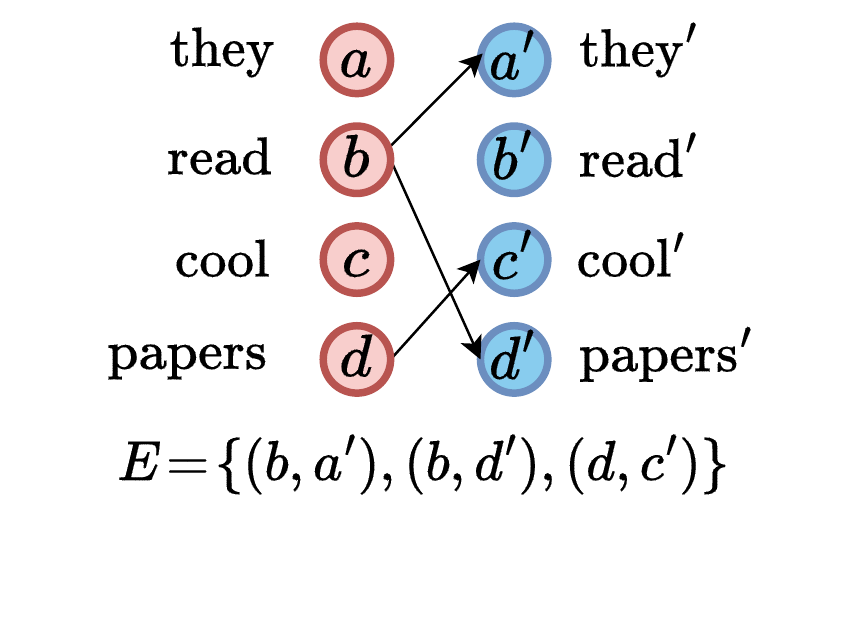}
    \caption{The token-split structure of \cref{fig:illu-dep-structure}, which is a partially ordered set.} \label{fig:illu-token-split}
\end{subfigure}\hspace{2mm}
\begin{subfigure}[b]{0.48\textwidth}
     \includegraphics[trim={15mm 8mm 10mm 3mm},clip, width=\textwidth]{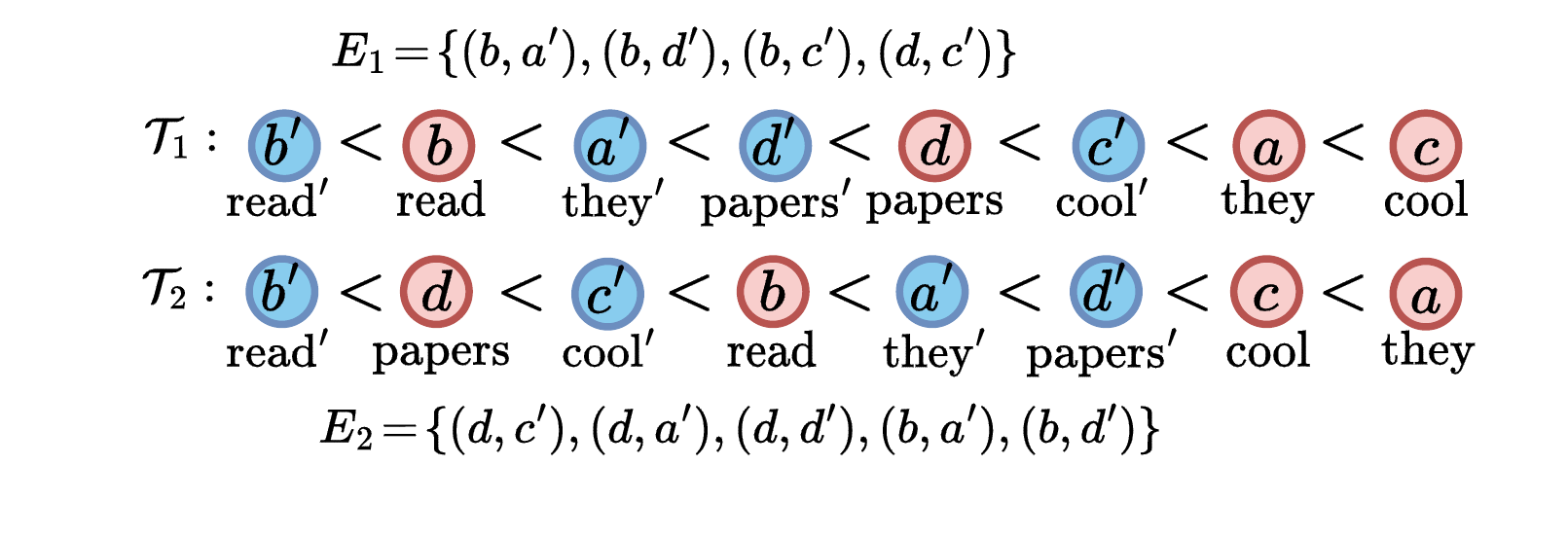}
    \caption{A realizer of \cref{fig:illu-token-split} with $2$ total orders such that $\edges=\edges_1 \cap \edges_2$. $\edges_1$ and $\edges_2$ contain the arcs oriented from $\verticesr$ (\colorbox{sred}{\makebox(7,3){\rule[-3pt]{0pt}{0pt}red}} nodes) to $\verticesb$ (\colorbox{sblue}{\makebox(10,3){\rule[-3pt]{0pt}{0pt}blue}} nodes) and from left to right. } \label{fig:illu-realizer}
\end{subfigure}
\caption{An overview of our method. To model a linguistic structure, represented as a directed graph in \cref{fig:illu-dep-structure}, we first convert it into a token-split structure (see \cref{sec:tss}) in \cref{fig:illu-token-split}, which is a partial order, to remove undesired transitivity. Then, $2$ real numbers are predicted for each vertex in \cref{fig:illu-token-split}. The positions of vertices in \cref{fig:illu-realizer} in the inequalities indicate the real numbers the vertices are mapped to. The vertices are sorted twice accordingly, resulting in a realizer (see \cref{def:realizer})  of $2$ total orderings, each possessing a set of edges $\edges_1$ and $\edges_2$. The exact set of desired edges in the original structure can be restored from the intersection of $\edges_1$ and $\edges_2$ (see \cref{sec:efficiency-guarantees}). Some qualitative examples are included in \cref{app:qual-ex}. \looseness=-1}
\label{fig:overview}
\end{figure*}

Strings of tokens in natural language are not constructed arbitrarily.
Indeed, which tokens co-occur within the same string is highly structured according to the rules of the language. Understanding such structures is critical to the comprehension of natural language. 
In natural language processing (NLP), many structured prediction tasks aim to automatically extract the underlying structure that dictates the relationship between the tokens in a string of text. Examples of such tasks include dependency parsing, semantic parsing, and coreference resolution. These tasks involve predicting complex and hierarchical output structures, making them inherently more challenging than their classification or regression counterparts. This paper contributes a novel and generic framework for structured prediction with empirical evidence from dependency parsing and coreference resolution. \looseness=-1

Many machine learning models for structured prediction score and predict graphs \citep{mcdonald-etal-2005-non,mcdonald-pereira-2006-online}, in which the vertices represent the tokens in the string and the edges represent the relations between them. 
One common strategy to model a graph is to decompose it into smaller subgraphs that are tractable \cite[\S 2.2]{taskar-etal-2004-max,smith-linguistic-structure}. 
For example, arc-factored models \cite{eisner-1996-three} score a graph only using the score of each constituent edge. 
However, even with such simplification, the computational costs of arc-factored models are superlinear.
The reason is that one needs to exhaustively compute scores for all possible edges in the graph, which, in general, requires at least \emph{quadratic} number of computations with respect to the length of the string. 
Another common strategy employs weighted transition-based systems \cite{knuth-shift-reduce,yamada-matsumoto-2003-statistical,nivre-2003-efficient}.
They decompose structures into transitions between intermediate model states and \emph{do} offer linear-time algorithms. 
However, in general, predicting the transitions between states cannot be parallelized, which is another worrying limitation. 
The authors of this paper contend the limitations of both graph-based and transition-based models are frustrating in an era when researchers are processing longer and longer texts \cite{long-range-arena}.\looseness=-1

From a more abstract perspective, the mathematical and algorithmic foundation on which structured prediction models rest can be regarded as a \emph{design choice}.
Graph-based and transition-based modeling are both specific design choices.
These design choices impose substantial \emph{inductive biases} by confining the class of models available to be utilized to solve the task and set limits on the efficiency of the models.
In this paper, we propose a fresh design choice for structured prediction.
Specifically, we propose an \emph{order-theoretic} perspective to understand and model structures in NLP.
Our approach can predict many structures in natural language in $\bigo{\seqlen}$ time where $\seqlen$ is the length of the string and is easily parallelizable.
The linear-time complexity means our method avoids comparing all $\bigo{\seqlen^2}$ token pairs. 
The key innovation that enables this speed-up is the following: 
Rather than considering structures as graphs, we view them as \emph{partial orderings} of the tokens in the strings.\looseness=-1

Concretely, we treat structured prediction as a regression task. 
Because the set of real numbers $\R$ is naturally ordered by $<$, we use real numbers as the proxy for determining the partial order. 
We predict $\orderdim$ numbers for each token and sort the tokens $\orderdim$ times accordingly. 
Two tokens are partially ordered by $\rel$ if and only if they are ordered by $<$ in \emph{all} of the $\orderdim$ orders above. We further provide an efficiency guarantee based on the well-established result in order theory that partial orders satisfying particular conditions can be represented as the intersection of as few as $\orderdim=2$ total orders. We show that most structures in natural language, including trees, alignments, and set partitions, satisfy these conditions. This result enables us to develop a linear-time algorithm for predicting such structures. \cref{fig:overview} gives an illustrative example of our framework applied to dependency parsing, in which the structure being modeled is a tree.\looseness=-1

On dependency parsing, our experimental results show that our method achieves $96.1$ labeled attachment score (LAS) and $97.1$ unlabeled attachment score (UAS) by using an intersection of only $2$ total orders, $96.4$ LAS and $97.4$ UAS using an intersection of $4$ total orders on the English Penn Treebank \cite{marcus-etal-1993-building}.
Furthermore, our method sets the new state of the art on Universal Dependencies 2.2 \cite{ud2.2}, while being 10 times faster and more memory efficient than graph-based models. Our method also achieves $79.2$ F1 score with only $4$ total orders on the English OntoNotes coreference resolution benchmark \cite{pradhan-etal-2012-conll}, which is on par with the state of the art, while being twice as fast and using less memory.

\section{Motivation}
We now provide high-level motivation for order-theoretic structured prediction.\looseness=-1

\subsection{Linearization of Structure} \label{sec:motivation-linearization}
The NLP literature abounds with linear-time structured prediction models.
Many are derived from the classical shift--reduce parsers \citep{knuth-shift-reduce} from the compiler literature.
One recent line of research has derived linear-time parsers by reducing parsing to tagging \cite[][\emph{inter alia}]{gomez-rodriguez-vilares-2018-constituent,strzyz-etal-2020-bracketing,kitaev-klein-2020-tetra,hexa}.
In these methods, a \emph{finite} set of tags $\tagspace$ is chosen such that all structures for parsing a string can be embedded in $\tagspace^\seqlen$ for a string of length $\seqlen$.
Tagging-based parsers often yield strong empirical performance in both constituency parsing and projective dependency parsing.
A natural question is, then, why do we need another method? \looseness=-1

We give two motivations. 
The first linguistic and the second mathematical.
Linguistically, the underlying structures of natural language, e.g., syntax, semantics, and discourse, are often \emph{not} aligned with the surface form of a sequence due to the existence of \defn{displacement} \cite[][Chapter 1, p. 44]{minimalist}.
The strong performance of parsing-as-tagging schemes relies, in part, on there being a tight correspondence between the surface string and structure \citep[][Proposition 1]{afra-parsing}.
Mathematically, the maximum number of structures that a discrete tag sequence can represent is at most $\bigo{|\tagspace|^\seqlen}$.
This set is simply not large enough to capture many structures of interest in NLP.
For instance, the space of non-projective dependency trees of $\seqlen$ tokens has a cardinality of $\bigo{\seqlen^{\seqlen-2}}$ \cite{cayley-trees}. 
Therefore, to parse non-projective dependency trees with tagging, the size of the tag set has to grow with $N$. 
However, this implies performing a classification task with an \emph{infinite} number of classes.\looseness=-1

\subsection{An Illuminating Example}\label{sec:illuminating-binary}
Order-theoretic approaches appear across computer science.
For instance, it is well-known that a binary tree can be uniquely restored from its inorder traversal and either the pre- or postorder traversal.
Consider the following binary tree.\looseness=-1
\begin{myexample}[Binary Tree] \label{eg:binary-tree}
\vphantom \\

\captionsetup{type=figure}
\noindent
\resizebox{.5\textwidth}{!}{%
    \begin{tabular}[t]{@{\hskip1pt}p{0.25\textwidth}@{\hskip1pt}p{0.25\textwidth}@{\hskip1pt}}
    \multirow{1}{0.36\textwidth}{\extreetrav} & 
    \multirow{1}{0.36\textwidth}{\extreetravpoone} \\[35pt]
    \multicolumn{2}{c}{{\small Inorder : $abcdefg$} \hspace{10pt} {\small Postorder: $acbegfd$}}
    \end{tabular}
}

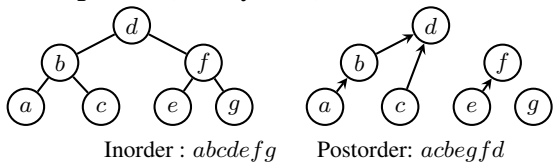
\captionof{figure}{An example binary tree and a partial order over the vertices induced by two total orders.}     
\vspace{-5pt}
\end{myexample} 
In a binary tree, a vertex $\wx$ is a left descendant of vertex $\wy$ \emph{if and only if} $\wx$ is visited before $\wy$ in both of the in- and postorder traversal.
E.g., in \Cref{eg:binary-tree}, $a$ is the left descendant of $d$ and is visited before $d$ in both the in- and postorder traversal.

Another way of stating the above fact is that a binary tree can be recovered from the \emph{combination of two total orders}, the one induced by the inorder traversal and the one induced by the postorder traversal.
Combining these two total orders yields a partial order, i.e., left descendant, from which the left child of each vertex can be identified.
This partial order is shown on the right of \cref{eg:binary-tree}.
See \cref{app:revisit-binary-traversal} and \citep[\S2.3.1, Ex.~7]{knuth-comp-programming} for further discussion. 
In light of these observations, we conceive an order-theoretic treatment that constructs a tree by predicting multiple total orders and intersecting them.
In terms of computation, predicting total orders only requires labeling each node with real numbers and then sorting, the complexity of which is linear under radix sort.
On the other hand, an arc-factored model necessarily computes all $\bigo{\seqlen^2}$ pair-wise scores for every pair of vertices to decide the existence of each edge.\looseness=-1

Next, we generalize the intuitions gained from this example. In \Cref{sec:order-theory}, we explore the class of graphs that can be efficiently represented with partial orders. In \Cref{sec:parameterization}, we show how to learn the ordering efficiently with neural networks.\looseness=-1

\section{Order and Structure}
\label{sec:order-theory}

In this section, we describe an order-theoretic treatment for linguistic structure prediction.
Specifically, we treat the structure to be predicted as a partially ordered set, i.e., a set equipped with a \defn{transitive} relation $\rel$.
We begin by revisiting how linguistic structures are represented as graphs.

\subsection{Linguistic Structures as Directed Graphs}\label{sec:stru-digraph}
Let $\alphabet$ be an alphabet, i.e., a finite set of natural language tokens, and let $\str = \token_1 \token_2 \cdots \token_{\seqlen} \in \kleene\alphabet$ be a string.
Linguistic structure prediction is the task of assigning a structure, e.g., a dependency tree, to a given string $\str$ in natural language.

A wide range of linguistic structures are built upon the relations between pairs of tokens.
Many structured prediction models are thus arc-factored, i.e., they predict the arcs between a pair of tokens and then combine them back into structures, which are our focus in this work.
Formally, their major goal is to model the homogeneous relation\footnote{A \defn{homogeneous relation} on a set $\calX$ is a binary relation between two elements in $\calX$. It can be equivalently represented with the set of edges in a graph in which $\calX$ is the set of vertices.\looseness=-1} on the \defn{spanning node set} $ \vertices = \{\token_1, \token_2, \cdots, \token_{\seqlen}\}$ of a sentence $\str = \token_1 \cdots \token_{\seqlen}$ \cite{dep-parsing}.
The output space is defined by the input itself, in contrast to the external label spaces in other tasks such as classification or language generation.\looseness=-1

\begin{definition}[Structure]\label{def:structure}
    A \defn{structure} over a string $\str = \token_1 \token_2 \cdots \token_{\seqlen}$ is a directed graph $\graph = \VE$,
    where $\vertices = \{ \token_1, \token_2, \cdots, \token_{\seqlen} \}$,
    $\edges \subseteq \vertices \times \vertices $ is the set of arcs.
    A \defn{typed structure} $\graph = \VER$ is a structure with $\edges \subseteq \vertices \times \vertices \times \relations$, where $\relations$ is a finite set of relation labels.
\end{definition}

Most linguistic structures are naturally subsumed under this definition. 
We give two examples of linguistic structure prediction tasks. \looseness=-1

\begin{myexample}[{\citealp[Dependency Parsing;][Def. 2.3]{dep-parsing}}]\label{ex:dep-parsing}
    A \defn{dependency structure} is a structure $\graph = \VER$, where $\edges \subseteq \vertices \times \vertices \times \relations $, and $\relations$ is the set of dependency relation types.
    If $(\wx, \wy, \labelrel ) \in \edges $, then $ \forall \labelrel' \neq \labelrel, (\wx, \wy, \labelrel') \notin \edges$.
\end{myexample}

\begin{myexample}[Coreference Resolution]\label{ex:coreference}
    A \defn{coreference structure} is a structure $\graph = \VER$, where $\edges \subseteq \vertices \times \vertices \times \relations $, and $\relations = \{\labelrel, \labelrelprime \}$. 
    The relations $\labelrel, \labelrelprime$ represent the entity mention and coreference, respectively.
    We have $(\wx, \wy, \labelrel) \in \edges$ if and only if the textual span $\wx\!:\!\wy$ in $\str$ is a mention of an entity.
    $(\wxone, \wxtwo, \labelrelprime) \!\in\! \edges \land (\wyone, \wytwo, \labelrelprime) \!\in\! \edges $ if and only if the textual spans  $\wxone\!:\!\wyone$ and $\wxtwo\!:\!\wytwo$ corefer. 
\end{myexample}

\subsection{From Directed Graphs to Partial Orders}\label{sec:dg-to-poset}

Our treatment constructs linguistic structures with techniques from order theory. 
The key is to cast the relation between tokens as an order, which is defined as follows. \looseness=-1
\begin{definition}[{\citealp[Order;][]{hausdorff1914grundzüge}}]\label{def:order}
An \defn{order} over a set $\vertices$ is a relation $\rel$ such that the following hold for all $\wx, \wy, \wz \in \vertices$:\looseness=-1
    \begin{enumerate}[noitemsep,nosep,label=\textnormal{(\alph*)},wide]
        \item \defn{irreflexivity}: $\wx \norel \wx$;
        \item \defn{asymmetry}: $\wx \rel \wy \implies \wy \norel \wx$;
        \item \defn{transitivity}: $\wx \rel \wy \land \wy \rel \wz \implies \wx \rel \wz$.
    \end{enumerate}
\end{definition}

Natural language exhibits structural sparsity in that each token in a string usually only interacts with very few other tokens with a particular relation.
For instance, in a dependency graph, there are no direct paths between most of the word pairs.
Such sparsity, from an order-theoretic point of view, can be characterized by \defn{incomparability} in a partially ordered set \cite[Chapter~1, p.~2]{birkhoff-1967-lattice}.

By analogy, we define the following \defn{partially ordered structure}, which is a partially ordered set mathematically.
Its elements are the tokens of a string, and its order encodes a linguistic structure.

\begin{definition}[Partially Ordered Structure] \label{def:partial-order}
    Let $ \graph = (\vertices, \edges) $ be a structure.
    Define the following relation $\rel$: For $\wx, \wy \in \vertices$, $\wx \rel \wy \iff (\wx, \wy) \in \edges$.
    We call $\poset = \VERel$ a \defn{partially ordered structure } if $\rel$ satisfies \cref{def:order}.\looseness=-1
\end{definition}

The essential theoretical foundation of our linguistic structure prediction framework is the classic result that partial orders can be represented by an intersection of total orders \cite{poset-dushnik-miller}. %
It is this result that enables us to use real numbers as a proxy to determine the partial ordering of tokens. \looseness=-1

\begin{definition}[Totally Ordered Structure]\label{def:total-order}
A partially ordered structure $\poset = \VERel$ is \defn{totally ordered} if $\forall \wx, \wy \in \vertices \colon \wx \rel \wy \lor  \wy \rel \wx$.\looseness=-1
\end{definition}
Due to the transitivity of the ordering relation $\rel$, a totally ordered structure of $|\vertices|$ elements always contains $|\edges| = \binom{|\vertices|}{2} \vspace{2pt}$ relations. 
Given a collection of structures $\{(\vertices, \edges_k)\}_{k \in \Nlessthan{\orderdim}}$
defined over the same set of vertices $V$, their \defn{intersection} is also a structure---namely $(\vertices, \cap_{k \in \Nlessthan{\orderdim}} \edges_k)$, where $ \orderdim \in \N, \Nlessthan{\orderdim} \defeq \{1,\cdots,\orderdim\}$. 
The intersection of partially ordered structures remains partially ordered.\looseness=-1

We now cite a famous theorem from order theory.
\begin{theorem}[\citet{szpilrajn}] \label{thm:szpilrajn}  
    \hspace{5pt}
    Every partially ordered structure is contained in a totally ordered structure, i.e., for every partially ordered structure $\poset = \VERel$, there exists a totally ordered structure $\toset = ( \vertices, \edgeshat, \rel )$ such that $\edges \subseteq \edgeshat$.\looseness=-1
\end{theorem}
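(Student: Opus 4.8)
The plan is to obtain $\toset$ as a \emph{maximal} partial order extending $\poset$, and then to show that maximality forces totality. Concretely, I would let $\calF$ be the family of all edge sets $\edgesprime$ with $\edges \subseteq \edgesprime \subseteq \vertices \times \vertices$ for which $(\vertices, \edgesprime)$ still satisfies the three axioms of \cref{def:order}, and order $\calF$ by inclusion. Since $\edges \in \calF$, the family is nonempty, and \emph{any} inclusion-maximal member $\edgeshat$ automatically satisfies $\edges \subseteq \edgeshat$, which is precisely the containment the theorem demands. Thus the theorem reduces to two tasks: (i) producing a maximal element of $\calF$, and (ii) verifying that a maximal element is totally ordered.

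Task (i) is essentially free here: because $\vertices = \{\token_1, \dots, \token_{\seqlen}\}$ is finite, every member of $\calF$ is a subset of the finite set $\vertices \times \vertices$, so an inclusion-maximal one exists. (For an arbitrary vertex set one would instead invoke Zorn's lemma, the only check being that the union of a $\subseteq$-chain in $\calF$ lies in $\calF$; transitivity of the union holds because any two witnessing comparabilities already sit inside a single member of the chain.) Either way I fix some maximal $\edgeshat \in \calF$.

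Task (ii) is the heart of the argument, which I would run by contradiction. Suppose $(\vertices, \edgeshat)$ is not total, so by \cref{def:total-order} there are distinct $\wx, \wy \in \vertices$ with $(\wx,\wy) \notin \edgeshat$ and $(\wy,\wx) \notin \edgeshat$. I then let $\edgesprime$ be the transitive closure of $\edgeshat \cup \{(\wx,\wy)\}$. By construction $\edgesprime$ is transitive, contains $\edges$, and strictly contains $\edgeshat$ since it adds $(\wx,\wy)$. Asymmetry of $\edgesprime$ will follow for free from transitivity together with irreflexivity (if $(\ba,\bc),(\bc,\ba)\in\edgesprime$ then $(\ba,\ba)\in\edgesprime$), so membership $\edgesprime \in \calF$ comes down to a single point---irreflexivity---and this is the step I expect to be the main obstacle.

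The irreflexivity check amounts to showing that adjoining $\wx \rel \wy$ creates no cycle. A self-loop $(\bc,\bc) \in \edgesprime$ would correspond to a closed walk in $\edgeshat \cup \{(\wx,\wy)\}$; since $\edgeshat$ on its own is acyclic, such a walk must traverse the new edge, i.e.\ it must contain a sub-walk from $\wy$ back to $\wx$ lying entirely in $\edgeshat$. Transitivity of $\edgeshat$ then yields $(\wy,\wx) \in \edgeshat$, contradicting the incomparability of $\wx$ and $\wy$. Hence $\edgesprime$ is irreflexive, so $\edgesprime \in \calF$ strictly dominates $\edgeshat$, contradicting maximality. Therefore no incomparable pair exists, $\edgeshat$ is total, and $\toset = (\vertices, \edgeshat, \rel)$ with $\edges \subseteq \edgeshat$ is the totally ordered structure we sought.
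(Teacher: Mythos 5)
Your proof is correct, but there is nothing in the paper to compare it against: the paper states \cref{thm:szpilrajn} as a cited classical result \citep{szpilrajn} and uses it as a black box, giving no proof of its own. What you have reconstructed is essentially the standard (and essentially Szpilrajn's original) maximal-extension argument, and it is complete. Your reduction is sound: any inclusion-maximal $\edgeshat \in \calF$ contains $\edges$ by construction, and since the vertex set $\vertices = \{\token_1, \dots, \token_\seqlen\}$ of a structure in this paper is finite (\cref{def:structure}), a maximal element exists without appeal to choice principles---your parenthetical Zorn check is also right, and the same ``two witnesses lie in a common chain member'' observation that you make for transitivity disposes of asymmetry for the chain union as well. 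The crux, irreflexivity of the transitive closure $\edgesprime$ of $\edgeshat \cup \{(\wx,\wy)\}$, is handled correctly: $\edgeshat$ is acyclic because transitivity collapses any closed walk to a self-loop, which irreflexivity forbids; hence a self-loop in $\edgesprime$ forces a walk using the new edge, and the segment between consecutive uses of that edge runs from $\wy$ to $\wx$ entirely inside $\edgeshat$, so transitivity would give $\wy \rel \wx$, contradicting the assumed incomparability (here you correctly use $\wx \neq \wy$, so that segment is nontrivial). Your remark that asymmetry comes for free from irreflexivity plus transitivity is likewise valid. For comparison with the literature: Szpilrajn's own proof adjoins the incomparable pair via an explicit one-step closure, $\edgeshat \cup \{(a,b) \mid (a = \wx \lor a \rel \wx) \land (b = \wy \lor \wy \rel b)\}$, and verifies the three axioms directly; your transitive-closure-plus-walk-decomposition phrasing trades that case analysis for the small acyclicity lemma, which you discharge properly. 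One cosmetic point only: \cref{def:total-order} as printed quantifies over all pairs including $\wx = \wy$, which taken literally conflicts with irreflexivity; your reading (totality over distinct pairs) is the intended one and is what your argument establishes.
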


\Cref{thm:szpilrajn} ensures that every partially ordered structure can be embedded in some totally ordered structure in the sense that the totally ordered structure \emph{contains} all the relations in the partially ordered structure.
More importantly, a stronger result can be shown: Partially ordered structures can \emph{always} be represented as intersections of a collection of totally ordered structures.
\begin{definition}[Realizer] \label{def:realizer}
   Let $\poset = \VERel$ be a partially ordered structure.
    A \defn{realizer} $\realizer_{\poset}$ of $\poset$ is a set of totally ordered structures $\Big\{\toset_1, \toset_2, \cdots, \toset_{\orderdim} \Big\}$ over $V$, i.e., each $\tosetk = \left( \vertices, \edges_k, \rel_k \right)$, 
    such that $\edges = \bigcap_{k \in \Nlessthan{\orderdim}} \edges_k$.
    In other words, $\forall \wx, \wy \in \vertices, \wx \rel \wy \iff \bigwedge_{k \in \Nlessthan{\orderdim}} \wx \rel_k \wy $.
\end{definition}

\begin{theorem}[{\citealp[][Thm. 2.32]{poset-dushnik-miller}}]
    There exists a realizer $\realizer_{\poset}$ for every partially ordered structure $\poset = \VERel$.\looseness=-1
\end{theorem}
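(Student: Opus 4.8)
The plan is to build the realizer explicitly from the total extensions supplied by Szpilrajn's theorem (\cref{thm:szpilrajn}). Since every totally ordered structure $\tosetk$ that extends $\poset$ satisfies $\edges \subseteq \edges_k$, \emph{any} collection of such extensions automatically yields $\edges \subseteq \bigcap_{k} \edges_k$. Hence all the content of the theorem lies in the reverse inclusion: one must select \emph{enough} total extensions so that every non-relation of $\poset$ is witnessed by at least one extension that orders the offending pair the other way. Concretely, for distinct $\wx, \wy \in \vertices$ with $(\wx,\wy) \notin \edges$, i.e.\ $\wx \norel \wy$, I need some extension $\tosetk$ in which $\wy \rel_k \wx$, so that $\wx \norel_k \wy$ and the pair $(\wx,\wy)$ is excluded from the intersection.

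First I would dispose of the comparable case: if $\wy \rel \wx$, then every extension already places $\wy$ below $\wx$, so no special effort is needed. The substantive case is an incomparable pair, $\wx \norel \wy$ and $\wy \norel \wx$. For each such ordered pair I would form the relation obtained by adjoining $\wy \rel \wx$ to $\rel$ and taking the transitive closure, then claim this is again a partial order. The single point to verify is irreflexivity: a newly created loop $\wz \rel \wz$ would have to route through the adjoined edge, forcing a chain $\wx \preceq \wz \preceq \wy$ and hence $\wx \rel \wy$ or $\wx = \wy$ by transitivity of $\rel$, both impossible for a distinct incomparable pair. Asymmetry and transitivity then come for free from the transitive closure, so \cref{thm:szpilrajn} applies and extends this augmented order to a total order $\toset_{(\wx,\wy)}$ in which, by construction, $\wy \rel \wx$.

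I would take the realizer to be the set of all these $\toset_{(\wx,\wy)}$ indexed over the (finitely many) ordered incomparable pairs, together with one arbitrary total extension of $\poset$ to cover the degenerate case in which $\poset$ is already total and no incomparable pairs exist. It then remains to verify $\edges = \bigcap_{k} \edges_k$: the inclusion $\edges \subseteq \bigcap_{k} \edges_k$ is the automatic Szpilrajn direction, and for the reverse inclusion I take $(\wx,\wy) \in \bigcap_{k} \edges_k$, rule out $\wx = \wy$ by irreflexivity of each total order, and rule out incomparability of $\wx,\wy$ because the dedicated extension $\toset_{(\wx,\wy)}$ would give $\wx \norel \wy$ there, contradicting membership in the intersection; hence $\wx \rel \wy$. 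I expect the main obstacle to be the partial-order-preservation lemma of the second paragraph, since the irreflexivity check is exactly where the incomparability hypothesis is genuinely used, while the two inclusions are routine bookkeeping. Note that this construction spends one total order per incomparable pair and is far from minimal, but the statement asserts only existence; bounding $\orderdim$ is a separate matter.
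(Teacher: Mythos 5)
Your argument is correct, but note that there is nothing in the paper to compare it against: the paper states this theorem with a citation to Dushnik and Miller (Thm.~2.32) and gives no proof, relying on it as a known result. What you have reconstructed is precisely the classical proof of that cited theorem: the inclusion $\edges \subseteq \bigcap_{k} \edges_k$ is automatic for any family of Szpilrajn extensions (\cref{thm:szpilrajn}), and the substance is manufacturing, for each incomparable ordered pair $(\wx, \wy)$, a linear extension of the augmented order $\rel \cup \{(\wy,\wx)\}$ so that $(\wx,\wy)$ is expelled from the intersection. Your irreflexivity check is the genuine crux and is sound; one micro-point worth making explicit is that a chain witnessing a loop could a priori traverse the adjoined edge more than once, but between any two consecutive uses one would already extract $\wx \rel \wy$ or $\wx = \wy$ from $\rel$ alone, the same contradiction, so minimal chains use it once and your analysis applies. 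A second small gloss sits in your final bookkeeping: after excluding $\wx = \wy$ and incomparability, comparability still leaves the case $\wy \rel \wx$, which must also be ruled out before concluding $\wx \rel \wy$; this follows because every extension $\toset_k$ then has $\wy \rel_k \wx$, hence $(\wx,\wy) \notin \edges_k$ by asymmetry---an observation you make in your first paragraph but should invoke inside the verification itself. Finally, your construction spends one total order per incomparable pair, i.e.\ up to quadratically many, which suffices for the existence claim here; be aware that the paper's efficiency story (\cref{sec:efficiency-guarantees}) rests on the separate and much stronger fact that the linguistic structures of interest have order dimension at most $2$, which your argument neither gives nor needs to give.
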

A corollary of the above theorem is that the complexity of a partially ordered structure can be characterized by its order dimension, which is defined as follows.\looseness=-1
\begin{definition}[{\citealp[Order Dimension;][]{poset-dushnik-miller}}]
Let $\poset = \VERel$ be a partially ordered structure. 
    The \defn{order dimension} $\orderdimfunc_{\poset}$ of $\poset$ is the cardinality of the smallest realizer of $\poset$.\looseness=-1
\end{definition}

\subsection{Efficiency Guarantees} \label{sec:efficiency-guarantees}
In this section, we give an efficiency guarantee of order-theoretic structured prediction.
These efficiency guarantees come from a series of results in order theory and lattice theory \cite[\emph{inter alia}]{poset-dushnik-miller,hiraguchi-55,birkhoff-1967-lattice}.

First, it is important to note that \emph{not} all partially ordered structures can be represented as an intersection of a \emph{constant} number of totally ordered structures \cite[Thm. 4.1]{poset-dushnik-miller}.

In fact, testing whether the order dimension of a partial order $\poset$ is at most $\orderdim$, $\forall \orderdim \ge 3$ is NP-complete \cite{yannakakis}.
However, we contend that most of the linguistic structures found in natural language processing \cite{smith-linguistic-structure}---including trees, equivalence classes (i.e., set partitioning), and alignment (i.e., bipartite matching)---can be represented as the intersection of $2$ totally ordered structures.
We postulate that this is possible due to their innate sparsity, i.e., a token tends to only interact with a few other tokens. 
These assumptions are formalized as follows.
\begin{assumption}[Sparsity] \label{ass:sparsity}
   A class of linguistic structures $\graph = \VE$ over natural language strings $\str \in \alphabet^*$ with $N = |\str|$ is called \defn{sparse} if $\bigo{|\edges|} = \bigo{\seqlen}$.
\end{assumption}
\begin{assumption}[Linguistic Structures are 2-dimensional] \label{ass:linguistic-2-dim}
  Structures in natural language can be represented as intersections of $2$ totally ordered structures.\looseness=-1
\end{assumption}

We justify \crefrange{ass:sparsity}{ass:linguistic-2-dim} in \cref{app:efficiency-guarantees}.
Empirical evidence is also provided in \cref{sec:experiments}, where $2$-dimensional order-theoretic models are trained to tackle two linguistic structure prediction tasks with high performance.\looseness=-1

\subsection{Token-Split Structures}\label{sec:tss}
An obvious limitation of our formulation of linguistic structures as partial orders is that 
by \Cref{def:order}, partial order is transitive. In other words, $\wx \rel \wy \land \wy \rel \wz $ implies $\wx \rel \wz$, which, however, does \emph{not} hold in the structures characterized by the directed graph formalization in \cref{def:structure}. 
In addition, we note that our notation of structures generalizes to cyclic graphs.
However, partially ordered structures are inherently acyclic due to the transitivity of $\rel$.
We now introduce the \defn{token-split structure}, which enables cycles and removes redundant edges introduced by transitivity in partially ordered structures.\looseness=-1

\begin{definition}[Token-Split Structure]\label{def:token-split}
    A \defn{token-split structure} induced by a structure $\graph = (\vertices, \edges)$ is a structure $\poset = ( \verticeshat, \edgeshat, \rel )$ such that
    \begin{deflist}[noitemsep,nosep,label=\textnormal{(\alph*)},ref=\textnormal{\thetheorem\,(\alph*)}]
        \item $\verticeshat \defeq \verticesr \cup \verticesb$, where $\verticesr = \{\wx^r \mid \wx \in \vertices \}, \verticesb = \{ \wx^b \mid \wx \in \vertices\} $; 
        \item $\verticesr \cap \verticesb = \varnothing$; \label{def:token-split-2}
        \item $\begin{aligned}[t]\edgeshat =& \left\{ ( \wx^{r}, \wy^{b} ) \mid (\wx, \wy) \in \edges \right\} .\end{aligned}$ \label{def:token-split-4}
    \end{deflist}
\end{definition}

In other words, a token-split structure maps the edges from the original structure, \emph{including self-loops}, into a bipartite graph in which the edges are oriented from $\verticesr$ to $\verticesb$. 
An example is displayed in \cref{fig:illu-token-split}.\looseness=-1

Given a token-split structure $\poset = ( \verticeshat, \edgeshat, \rel)$, we can recover the original structure $\graph = (\vertices, \edges)$ from which $\poset$ is induced using the following equation
\begin{align} \label{eq:convert-back}
\!    \edges \!\!=\!\!  \{ ( \wx, \wy) \mid \wx^{r} \in \verticesr \! \land \wy^{b} \in \verticesb \! \land \wx^{r} \rel \wy^{b} \}
\end{align}

\begin{restatable}{theorem}{thmvsispo} \label{thm:vs-is-po}
    Token-split structures are partially ordered.
\end{restatable}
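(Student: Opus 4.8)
The plan is to verify directly that the relation $\rel$ induced by $\edgeshat$ on the vertex set $\verticeshat$ satisfies the three axioms of \cref{def:order}. The single structural fact driving every case is \cref{def:token-split-4} together with the disjointness condition \cref{def:token-split-2}: every arc in $\edgeshat$ has the shape $(\wx^r, \wy^b)$, so whenever $(u, v) \in \edgeshat$ we may conclude that $u \in \verticesr$ and $v \in \verticesb$, and since $\verticesr \cap \verticesb = \varnothing$ no vertex can play both roles. I would isolate this as a first observation and then feed it into each of the three checks, recalling that by \cref{def:partial-order} the relation is $u \rel v \iff (u,v) \in \edgeshat$.

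First I would establish irreflexivity. If $u \rel u$ held for some $u \in \verticeshat$, then $(u, u) \in \edgeshat$, forcing $u \in \verticesr$ and simultaneously $u \in \verticesb$, which contradicts $\verticesr \cap \verticesb = \varnothing$. Next, for asymmetry, I would suppose $u \rel v$ and, toward a contradiction, $v \rel u$. The first gives $u \in \verticesr$ and $v \in \verticesb$; the second gives $v \in \verticesr$ and $u \in \verticesb$. Either conclusion places a vertex in both parts, again contradicting disjointness, so $v \norel u$.

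The transitivity check is the one worth flagging, though it turns out to be the easiest: I claim the hypothesis $u \rel v \land v \rel w$ is never satisfiable. Indeed $u \rel v$ forces $v \in \verticesb$ while $v \rel w$ forces $v \in \verticesr$, so $v \in \verticesr \cap \verticesb = \varnothing$, a contradiction. Hence the implication $u \rel v \land v \rel w \implies u \rel w$ holds vacuously. This reflects the design intent behind token-split structures announced in \cref{sec:tss}: splitting each token into a red source copy and a blue sink copy makes the arc set a bipartite orientation from $\verticesr$ to $\verticesb$ with no directed path of length two, which is precisely what eliminates the spurious transitive edges that \cref{def:order} would otherwise impose.

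I do not anticipate a genuine obstacle here; the content is entirely in recognizing that the red/blue split turns $\edgeshat$ into a height-one bipartite order. The only place to be careful is bookkeeping the endpoint memberships and citing \cref{def:token-split-2,def:token-split-4} correctly, rather than any nontrivial argument. With the three axioms verified, $\poset = (\verticeshat, \edgeshat, \rel)$ satisfies \cref{def:order} and is therefore a partially ordered structure.
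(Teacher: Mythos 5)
Your proof is correct and follows essentially the same route as the paper's own argument: you verify the three axioms of \cref{def:order} directly, using \cref{def:token-split-2,def:token-split-4} to force every arc endpoint into $\verticesr$ or $\verticesb$, with asymmetry and (vacuous) transitivity both falling out of the disjointness $\verticesr \cap \verticesb = \varnothing$. Your explicit framing of the key observation up front and the remark that the order has height one are nice touches, but the substance matches the paper's proof.
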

\begin{proof}
    See \cref{sec:vs-is-po-proof}.
\end{proof}
\begin{remark}[Conversion between Structures and Partially Ordered Structures]
\Cref{thm:vs-is-po} and \cref{eq:convert-back} ensure that we can convert back and forth between any structure under \cref{def:structure} and a partially ordered structure.
Specifically, they enable us to first convert a structure to a partially ordered structure, predict it order-theoretically, and then finally convert it back to a structure.

\end{remark}

\section{A Neural Parameterization}
\label{sec:parameterization}
In this section, we describe the core technical contribution of our work.
We show how to model partially ordered structures with a neural model.
Specifically, we define a parameterized realizer of \cref{def:realizer} and an objective function for training the realizer to model the token-split structures.
We also give algorithms for efficient training and decoding.\looseness=-1

\subsection{Neuralized Total Order}

We now discuss a parameterized neural network that induces partial orders as the intersection of several total orders.\looseness=-1

\begin{definition}[Functional Realizer]\label{def:func-real}
    A \defn{functional realizer} of a partially ordered structure $\poset = \VERel$ is a set of mappings $\funcrealizer = \{ \rankfunc^{(1)}, \cdots, \rankfunc^{(\orderdim)} \}$, where $\param$ is the set of learnable parameters shared among $\rankfunc^{(k)}$, 
    and the order dimension $\orderdim \in \N$ is a hyperparameter of the realizer.
    The realize element $\rankfunck \colon \vertices \to \R, \; \forall k \in \Nlessthan{\orderdim}$ maps each vertex in the input structure to a real number.
    We overload $\funcrealizer$ as a mapping $\funcrealizer \colon \vertices \to \R^\orderdim$, defined as $\funcrealizer(\wx) \defeq \begin{bmatrix}\rankfunc^{(1)}(\wx), \cdots, \rankfunc^{(\orderdim)}(\wx) \end{bmatrix}^{\top}$. \looseness=-1
\end{definition}
The set of real numbers $\R$ is totally ordered, in which the order is given by the $<$ (less than) relation.
Each individual $\rankfunc^{(k)} \in \funcrealizer$ induces a total order $\tosetk = \left( \vertices, \{ (\wx,\wy) \mid \wx,\wy \in \vertices, \rankfunck(\wx) < \rankfunck(\wy) \}, \rel_k \right)$.\footnote{In this work, we assume $\rankfunck$ is injective, i.e., $\forall \wx,\wy \in \vertices, \rankfunck(\wx) \neq \rankfunck(\wy)$. See \cref{sec:lim-float} for further discussion on the practicality of this assumption.} 

The functional realizer assigns $\orderdim$ total orders $\{\toset_1, \toset_2, \cdots, \toset_{\orderdim} \}$ to the input string.
During decoding, an edge from $\wx $ to $ \wy$ exists in $\poset$ if and only if $\wx \rel_k \wy$ holds in $\tosetk, \forall k \in \Nlessthan{\orderdim}$.

Implementing \cref{def:func-real} with neural networks is straightforward.
To obtain $\rankfunck(\wx^{r})$ and $\rankfunck(\wx^{b})$, where $\wx^{r}, \wx^{b}$ are two vertices introduced by the token-split formulation (\cref{def:token-split}) corresponding to the same token $\bw_{\wx}$ in the input, we apply two linear projections on the contextualized representation of $\wx$ given by a pretrained model parameterized by $\param$.\footnote{If $\bw_{\wx}$ consists of more than one subword due to tokenization, we apply the projection to the representation of the last subword.}
In total, $2\orderdim$ real numbers are predicted for each input token. \looseness=-1

\subsection{Learning a Functional Realizer}
To learn the functional realizers with a gradient-based procedure, we need a differentiable objective.
In a partially ordered structure $\poset$ with functional realizer $\funcrealizer = \{ \rankfunc^{(1)}, \rankfunc^{(2)}, \cdots, \rankfunc^{(\orderdim)} \}$, we have $\wx \rel \wy$ if and only if $\bigwedge_{k \in \Nlessthan{\orderdim}} \left(\rankfunck(\wx) < \rankfunck(\wy)\right)$. %
We re-express this condition as follows:
\begin{equation}
\pairFunc(\wx, \wy) 
 \defeq \max_{k \in \Nlessthan{\orderdim}} \left(\rankfunck(\wx) - \rankfunck(\wy) \right) < 0   \label{eq:pairwise-func}
\end{equation}
We call $\pairFunc$ a \defn{pair-wise function}. 
On the other hand, we have $\wx \norel \wy$ if and only if $\bigvee_{k \in \Nlessthan{\orderdim}} \left(\rankfunck(\wx) \ge \rankfunck(\wy)\right)$.
This condition can be re-expressed as $ \pairFunc(\wx, \wy)  \ge 0$.
Thus, empirically, the smaller $\pairFunc(\wx, \wy)$ is, the more likely the relation $\wx \rel \wy$ exists.\looseness=-1 

We now define a training objective, which encourages the model to make decisions that comply with the order constraints enforced by the structures, described by \cref{eq:pairwise-func}.
Given the token-split version $\poset = \left( \vertices, \edges, \rel \right)$ induced by the structure being modeled, we consider the following objective
\begin{equation} \label{eq:objective}
\begin{split}
    \objective(\param) =
    \log  \sum_{(\wx, \wy) \in \vertices^2 \setminus \edges} & \exp -\pairFunc(\wx, \wy) + \\
       \log & \sum_{(\wx, \wy) \in E} \exp \pairFunc(\wx, \wy) 
\end{split}
\end{equation}
The first term maximizes $\pairFunc(\wx, \wy) $ for $\wx \norel \wy$, while the second minimizes $\pairFunc(\wx, \wy) $ for $\wx \rel \wy$.
Note that in the second term, we assume $\bigo{|\edges|} = \bigo{\seqlen}$ in a linguistic structure following \cref{ass:sparsity}.\looseness=-1

\subsection{An Efficient Algorithm} \label{sec:efficient-algorithm}

We remark that both training and decoding of the proposed model can be regarded as performing an aggregation for every token $\wx \in \vertices$.

\begin{definition}[Aggregation]
    An $\oplus$-\defn{aggregation} given a token $\wx$ for a pair-wise function $\pairFunc$ over the set $\vertices$ is an operation $\bigoplus_{\wy \in \vertices} \pairFunc(\wx, \wy)$, where $\oplus$ is a commutative and associative operation over which real number addition $+$ is distributive.
\end{definition}
Aggregation is a common abstraction for computing the relation between a token $\wx$ and every other token. 
The aggregation operator is associative and commutative, thus can be computed in parallel. 
The number of required computations is $\bigo{|\vertices|}$ for naïvely computing an aggregation of $\wx$.

During training, we $\oplus$-aggregate using negative \texttt{log-sum-exp}, i.e., we compute $-\log \sum_{\wy} \exp (-\pairFunc(\wx, \wy))$ for all $\wx$, to compute the first term of \cref{eq:objective}.
In greedy decoding, we $\oplus$-aggregate by computing $\min_{\wy} \pairFunc(\wx, \wy)$ to find the optimal relation arc from each $\wx$.
Na{\"i}vely, $\oplus$-aggregating for every token $\wx \in \vertices$ takes $\bigo{\seqlen^2}$ in total, as each aggregand has a complexity of $\bigo{\seqlen}$.
However, the partial order we assigned over $\vertices$ allows us to efficiently compute the aggregands.  \looseness=-1

For $\orderdim = 2$, we can inspect \cref{eq:pairwise-func} to see that $\pairFunc(x, y)$ is equal to \emph{either} $\rankfunc^{(1)}(\wx) - \rankfunc^{(1)}(\wy)$ or $\rankfunc^{(2)}(\wx) - \rankfunc^{(2)}(\wy)$.
We now define the following two subsets of $\vertices$ for $k \in \{ 1, 2\}$:
\begin{subequations}
\begin{align}
   \! \!\!\!\calS_{k}(\wx) &\!= \Big\{ \wy \mid  \pairFunc(\wx, \wy) = \rankfunc^{(k)}(\wx) -\rankfunc^{(k)}(\wy) \Big\}  \nonumber
\end{align}
\end{subequations}

Using this notation, we can write the following
\begin{subequations}
\begin{align}
   \!\!\! \bigoplus_{(x, y) \in \vertices^2}& \pairFunc(x, y) = 
     \bigoplus_{x \in \vertices} \bigoplus_{y \in \vertices}\pairFunc(x, y) \\
     &= \underbrace{\bigoplus_{x \in \vertices} \bigoplus_{y \in\calS_{1}(\wx)} \left(\rankfunc^{(1)}(\wx) -\rankfunc^{(1)}(\wy) \right)}_{\defeq G_1} \\
     &\quad\quad \oplus   \underbrace{\bigoplus_{x \in \vertices} \bigoplus_{y \in\calS_{2}(\wx)} \left(\rankfunc^{(2)}(\wx) -\rankfunc^{(2)}(\wy) \right)}_{\defeq G_2} \nonumber 
\end{align}
\end{subequations}
We now give an efficient algorithm to compute $G_1$ and, by symmetry, $G_2$.
Our first observation is that, by distributivity, we can write
\begin{subequations}
\begin{align}
    G_1 &= \bigoplus_{x \in \vertices} \bigoplus_{y \in\calS_{1}(\wx)} \left(\rankfunc^{(1)}(\wx) -\rankfunc^{(1)}(\wy) \right) \\
    &= \bigoplus_{x \in \vertices} \underbrace{\bigg( \rankfunc^{(1)}(\wx) + \bigoplus_{y \in\calS_{1}(\wx)} -\rankfunc^{(1)}(\wy) \bigg)}_{\defeq G_1(\wx)} 
\end{align}
\end{subequations}
Alone, this application of dynamic programming does not reduce the complexity from $\bigo{N^2}$ to $\bigo{N}$ as desired because the inner aggregand, $\bigoplus_{y \in\calS_{1}(\wx)} -\rankfunc^{(1)}(\wy) \vspace{2pt}$, itself still takes $\bigo{N}$ time.
However, we are able to compute $\bigoplus_{y \in\calS_{1}(\wx)} -\rankfunc^{(1)}(\wy) \vspace{2pt}$ in amortized $\bigo{1}$ time due to \citeposs[Eq.~1]{fredman-trick} algebraic trick.\looseness=-1

The strategy is to sort\footnote{As before, we take the complexity of sorting to be $\bigo{\seqlen}$ where we can apply radix sort as implemented by Pytorch.} the vertices of the partially ordered structure according to $\rankfunc^{(1)}(\wy) - \rankfunc^{(2)}(\wy)$.
Thus, if we have
$\rankfunc^{(1)}(\wy) - \rankfunc^{(2)}(\wy) < \rankfunc^{(1)}(\wx) - \rankfunc^{(2)}(\wx)$, simple algebra reveals that $\rankfunc^{(2)}(\wx) - \rankfunc^{(2)}(\wy) < \rankfunc^{(1)}(\wx) - \rankfunc^{(1)}(\wy)$.
Thus, for a given $\wx$, every vertex $\wy$ that comes \emph{before} $\wx$ in the sorted order satisfies $\pairFunc(\wx, \wy) = \rankfunc^{(1)}(\wx) -\rankfunc^{(1)}(\wy)$.
Aggregating in this order enables intermediate results to be \emph{reused}. \looseness=-1

\begin{restatable}{algorithm}{efficientcomputation}
    \caption{Computing $G_1$ when $\orderdim=2$.} 
    \begin{algorithmic}[1]
        \Procedure{\textsc{Compute-}$G_1$}{$\rankfunc^{(1)}, \rankfunc^{(2)}, \vertices$}
        \State $\sortedvertices \gets \ttsort \left(\vertices, \ttkey\!=\!\rankfunc^{(1)}\!-\!\rankfunc^{(2)}\right)$ \label{line:alg-1-sort}
        \State $G_1, s_1 \gets \bzero, \bzero$ \Comment{$\bzero$ is the zero element of $\oplus$}
       \For{$n=1$ \textbf{ up to} $N$} \label{line:alg-1-for-1}
        \State $q_1 \!\!= \rankfunc^{(1)}(U_n) + s_1 $ \Comment{$q_1=G_1(U_n)$} \label{line:alg-1-for-2}
        \State $G_1 \, \oplus\!\!= q_1$ \label{line:alg-1-for-3}
        \State $s_1 \, \oplus\!\!= -\rankfunc^{(1)}(U_n)$ \label{line:alg-1-for-4}
       \EndFor
       \State \Return $G_1$
        \EndProcedure
    \end{algorithmic} \label{alg:efficient-computation}
\end{restatable}

Likewise, if we sorted in reverse, i.e., according to $\rankfunc^{(2)}(\wy) - \rankfunc^{(1)}(\wy)$, the same manipulation yields that for a given $\wx$, every vertex $\wy$ that comes \emph{before} $\wx$ in the \emph{reverse} sorted order satisfies $\pairFunc(\wx, \wy) = \rankfunc^{(2)}(\wx) -\rankfunc^{(2)}(\wy)$. \looseness=-1

The algorithm for computing $G_1$ is given in \cref{alg:efficient-computation}, which has $\bigo{\seqlen}$ computations in total. 
Moreover, if parallelized, it can be run in $\bigo{\log \seqlen}$ time.
For $\orderdim > 2$, we speculate that the aggregation algorithm can be done in $\bigo{\orderdim \seqlen \log^{\orderdim-2} \seqlen}$. 
We leave this to future work. 
See \cref{app:dim-gt-2} for further discussion. \looseness=-1

\section{Experiments} \label{sec:experiments}

We report the experimental results on two representative linguistic structure prediction problems in NLP, namely dependency parsing and coreference resolution.
The graph-theoretic definitions of these tasks are given in \cref{ex:coreference,ex:dep-parsing}.
We first convert the linguistic structures to partially ordered (token-split) structures as described in \cref{sec:tss}, and then apply the neural method described in \cref{sec:parameterization} to model the partially ordered structures.\looseness=-1

\subsection{Dependency Parsing} 
\paragraph{Modeling.} 
Orders $\rel$ are not typed in \cref{def:partial-order}.
In other words, under \Cref{def:partial-order}, all relations in a partially ordered structure are of the same type. 
To model dependency type labels, we apply a token-level classifier on the contextualized representation.
During decoding, similar to arc-factored models for dependency parsing, we keep the head word that minimizes $\pairFunc(\wx, \wy)$ for a given $\wx$, i.e., $\argmin_{\wy \in \vertices}  \pairFunc(\wx, \wy)$. \looseness=-1

For pretrained language models, we use \texttt{XLNet-large-cased}\footnote{\url{https://huggingface.co/xlnet-large-cased}} \cite{xlnet} for PTB, \texttt{bert-base-chinese}\footnote{\url{https://huggingface.co/bert-base-chinese}} for CTB, and \texttt{bert-base-multilingual-cased}\footnote{\url{https://huggingface.co/bert-base-multilingual-cased}} for UD.

\paragraph{Datasets.}
We conduct experiments on the English Penn Treebank \cite[PTB;][]{marcus-etal-1993-building}, the Chinese Penn Treebank \cite[CTB;][]{ctb}, and the Universal Dependencies 2.2 \cite[UD;][]{ud2.2}.
Hyperparameter settings and dataset statistics are given in \cref{app:hyperparam-dep,app:dataset-dep}.

\paragraph{Accuracy.} 
We report the experimental results in \cref{tab:ptb-results}. 
The full results on UD are included in \cref{app:ud-extra}.
On PTB and UD, our method achieves state-of-the-art performance compared with $\bigo{\seqlen^3}$ \cite{yang-tu-2022-headed}, $\bigo{\seqlen^2}$ \cite{mrini-etal-2020-rethinking}, and $\bigo{\seqlen}$ \cite{hexa} methods.
Although \citeposs{hexa} method has the same complexity as ours, it is worth noting that our method is more general since it can handle non-projective dependencies \emph{without} using pseudo-projectivization \cite{nivre-nilsson-2005-pseudo}.

\begin{table}[ht]
\centering
\resizebox{\columnwidth}{!}{%
\begin{tabular}{@{}ccccccc@{}}\toprule
  & \multicolumn{2}{c}{PTB} & \multicolumn{2}{c}{CTB} & UD \\
\cmidrule(lr){2-3} \cmidrule(lr){4-5} \cmidrule(lr){6-6}
Model  & UAS & LAS & UAS & LAS & LAS \\ \midrule
\citeauthor{zhou-zhao-2019-head}$^*$         & 97.0  & 95.4  & 91.2  & 89.2 & - \\
\citeauthor{mrini-etal-2020-rethinking}$^*$  & \textbf{97.4}  & {96.3}& 94.6  & 89.3 & - \\ \midrule
\citeauthor{chen-manning-2014-fast}          & 91.8  & 89.6  & 83.9  & 82.4 & - \\
\citeauthor{dozat-biaffine}                  & 95.7  & 94.1  & 89.3  & 88.2 & 91.8 \\
\citeauthor{yang-tu-2022-headed}$^{\#}$         & \textbf{97.4}  & 95.8  & \textbf{93.3} & \textbf{92.3}   & 91.9 \\  
\citeauthor{hexa}     & \textbf{97.4}  & \textbf{96.4}  & 93.2 & 91.9   & 91.8 \\  \midrule
\texttt{Ours} ($\orderdim=2$)   & 97.1 &  96.1 &  90.7  &     89.5 & 91.2 \\
\texttt{Ours} ($\orderdim=4$)   & \textbf{97.4} &  \textbf{96.4} &  92.4  & 91.4  &     \textbf{92.1} \\ \bottomrule
\end{tabular}}
\caption{Experimental results on PTB, CTB, and UD. $^{*}$ indicates usage of extra constituency annotation. $^{\#}$ is our re-implementation using the same pretrained encoder as ours. $\orderdim$ is the dimension of the realizer used.}
\label{tab:ptb-results}
\end{table}

\paragraph{Efficiency.}
\begin{table}[ht]
\centering
\resizebox{\columnwidth}{!}{%
\begin{tabular}{@{}ccccccc@{}}\toprule
 & \multicolumn{3}{c}{Speed (sent/s) $\uparrow$} & \multicolumn{3}{c}{Memory (GB) $\downarrow$} \\
\cmidrule(lr){2-4}
\cmidrule(lr){5-7}
\#token & \texttt{Ours} & \Hexataggertt & \Biaffinett & \texttt{Ours} & \Hexataggertt & \Biaffinett \\ \midrule
32 & 3232 & 2916 & 493 & 1.7 & 2.9 & 4.5 \\
64 & 3332 & 3011 & 328 & 1.7 & 3.0 &  10.1 \\
128 & 3182 & 2649 & 202 & 1.9 & 3.7 &  30.6 \\
256 & 3314 & 3270 & 98 & 3.1 & 4.5  &  56.2 \\ \midrule
 {overall} &  \textbf{3347} & 3176 & 338 & \textbf{1.7} & 3.0 &  10.6 \\ \bottomrule
\end{tabular}}
\caption{Speed and memory consumption comparison on PTB test set. 
We report results averaged over 3 random runs of our method with $\orderdim=2$. The other settings and the results for \Hexataggertt{} and \Biaffinett{} are taken from \citet[Tab. 3]{hexa}.}
\label{tab:efficiency-ptb}
\end{table}

We evaluate the efficiency of our method with two representative baseline models.
As depicted in \cref{tab:efficiency-ptb}, we observe that our method with $\orderdim=2$ is almost 10 times as fast as \Biaffinett{} \cite{dozat-biaffine}, and consumes less memory than \Hexataggertt{} \cite{hexa}, which is $\bigo{\seqlen}$ in space complexity.
We further include some qualitative examples using $\orderdim=2$ in \cref{app:qual-ex} for a more intuitive picture of our method.

\subsection{Coreference Resolution}
\paragraph{Modeling.} 
Our method operates in a two-stage manner to accommodate the two relations in \cref{ex:coreference}. 
First, it extracts a list of entity mentions using the partial order induced by $\labelrel$ (mention relation). In other words, $\wx \rel \wy \iff$ span $\wx\!:\!\wy$ is an entity mention. 
Then, it models the partial order induced by $\labelrelprime$ (coreference relation) over the extracted mentions. 
In other words, $m_1\rel m_2 \iff$ mention $m_1$ corefers to $m_2$. 
To find the optimal coreferent antecedent for each mention $m$, we keep $m'$ that minimizes $\pairFunc(m, m')$.\looseness=-1

The overall complexity of the coreference resolution model is $\bigo{\seqlen}$, since the complexity of the encoder used \cite{Beltagy2020Longformer} and the number of valid mentions are both $\bigo{\seqlen}$, assuming entity mentions are constituents \cite{liu-etal-2022-structured}. 
We experiment on the CoNLL-2012 English shared task dataset \cite[OntoNotes;][]{pradhan-etal-2012-conll}.
Hyperparameter settings and dataset statistics are given in \cref{app:hyperparam-coref,app:dataset-coref}. \looseness=-1

\paragraph{Accuracy.}
The experimental results are displayed in \cref{tab:ontonotes-results}. 
Similar to the results for dependency parsing, an intersection of $2$ total orders can already achieve reasonable performance on coreference resolution.
This provides \emph{empirical evidence} for our assertion in \cref{sec:efficiency-guarantees} that most structures in NLP can be represented as the intersection of at most 2 total orders.
When $\orderdim = 4$, the performance of our method is comparable to \citet{kirstain-etal-2021-coreference}, which uses the same pretrained encoder as ours and requires an $\bigo{\seqlen^2}$ biaffine product computation for token-pair scores. \looseness=-1

 \paragraph{Efficiency.} 
We compare the efficiency of our method with \citeposs{kirstain-etal-2021-coreference} method.
It is worth noting that \citet{kirstain-etal-2021-coreference} has already performed aggressive optimization in both the speed and memory footprint of coreference modeling.
Our method is still 2 times as fast, achieving a speed of 82.8 documents per second vs. 41.9, while using less memory, especially on long documents.
The full efficiency statistics are given in \cref{app:efficiency-eval}. \looseness=-1

\begin{table}[ht]
\begin{center}
\resizebox{\columnwidth}{!}{%
\begin{tabular}{cccc}
\toprule
        & Avg. P        & Avg. R        & Avg. F1       \\ \midrule
\citet{lee-etal-2017-end} & 69.9 & 64.7          & 67.2          \\
\citeauthor{kantor-globerson-2019-coreference} & 76.1 & 77.1          & 76.6          \\
\citet{joshi-etal-2020-spanbert} & 80.1 &   78.9        & 79.6          \\
\citet{xu-choi-2020-revealing}  & 80.3   &  79.5          & 79.9          \\ 
\citet{kirstain-etal-2021-coreference}  & 81.2   &  79.4          & 80.3          \\ \midrule
\texttt{Ours} ($\orderdim=2$)    & 75.2          & 74.8 & 75.0  \\ %
\texttt{Ours} ($\orderdim=4$)    & 79.3          & 79.0 & 79.2 \\  %
\bottomrule %
\end{tabular}
}
\caption{Experimental results on the OntoNotes benchmark. $\orderdim$ is the dimension of the realizer. }
\label{tab:ontonotes-results}
\end{center}%
\end{table}

\section{Related Work\footnote{More related work is included in \cref{app:related-work}.}} 

\subsection{Structured Prediction}
Structured prediction constitutes an important part of natural language processing. 
It involves the modeling of interrelated variables or outputs with structural constraints.
Some representative structured prediction problems are sequence tagging \cite{church-1988-stochastic}, dependency parsing \cite{dep-parsing}, and coreference resolution \cite{discourse-processing}.\looseness=-1

Structured prediction can often be formulated as learning and inference of probabilistic graphical models \cite[\S 2.2]{smith-linguistic-structure}.
The key idea is to represent the probability distribution over the output space using a graph,
in which each vertex corresponds to a random variable, and each edge corresponds to a dependence relation between two random variables. \looseness=-1

\subsection{Graph-Based Parsing}

Graph-based parsers, or arc-factored parsers, construct graphs by scoring all possible arcs \cite{eisner-1996-three,mcdonald-pereira-2006-online} between each pair of words. 
At inference time, they use either maximum spanning tree (MST) finding algorithms \cite{chu-and-liu,edmonds,tarjan}, or the projective MST algorithm \cite{eisner-1996-three} to build the valid dependency trees with maximum score. 
\citet{kiperwasser-goldberg-2016-simple} present a neural graph-based parser that uses the same kind of attention mechanism as \citet{bahdanau-attention} for computing arc scores.
Greedy decoding that independently assigns a head word to each word \cite{dozat-biaffine} is also widely used as an approximation to exact inference algorithms.

\subsection{Tagging-Based Parsing}

Inspired by transition-based parsers \cite{knuth-shift-reduce} and \citeposs{bangalore-joshi-1999-supertagging} seminal work on \defn{supertagging}, one line of work uses pretrained models to parse dependency trees by inferring tags for each word in the input sequence. 
\citet{li-etal-2018-seq2seq} and \citet{kiperwasser-ballesteros-2018-scheduled} predict the relative position of the dependent with respect to its head in a sequence-to-sequence manner. 
\citet{strzyz-etal-2019-viable} give a framework for analyzing similar tagging schemes. 
\citet{gomez-rodriguez-etal-2020-unifying} infer a chunk of actions in a transition-based system for each word in the sequence. \looseness=-1

For non-projective dependency parsing, \citet{gomez-rodriguez-nivre-2010-transition,divisible-transition} show that efficient parsers exist for 2-planar trees \cite{yli2003multiplanarity}, a sub-class of non-projective trees whose arcs can be partitioned into 2 sets while arcs in the same set do not cross each other.
\citet{strzyz-etal-2020-bracketing} propose an encoding scheme for 2-planar trees, enabling a tagging-based parser for such trees.
As mentioned in \cref{sec:motivation-linearization}, to handle the entire set of non-projective trees, the size of the tag set has to be unrestricted, which limits the efficiency and applicability of this series of approaches of approaches.

\subsection{Parsing with Syntactic Distance}

\citet{shen2018neural,shen-etal-2018-straight} introduce a constituent parsing scheme which is also based on the comparison of real numbers.
In this scheme, a neural model is trained to assign one real number, termed the \defn{syntactic distance}, to the gap between every pair of neighboring tokens.
To parse a span into two sub-constituents, the gap with the largest syntactic distance within that span is chosen as the split point.
Parsing can be done by recursively performing the above splitting procedure starting from a given string.
The algorithm has a runtime complexity of $\bigo{\seqlen \log\seqlen}$, which is significantly more efficient than chart-based parsers with $\bigo{\seqlen^2}$ complexity.
However, this method does not generalize easily to perform non-context-free parsing, since it cannot handle the possible discontinuity of constituents. 
Moreover, the recursive splitting procedure restricts the output space of parse trees to be a subset of phrase-structure trees \cite{dyer-syntactic-distance}. \looseness=-1

\section{Conclusion}
In this paper, we propose an order-theoretic treatment of linguistic structured prediction.
Theoretical and empirical results show that most linguistic structure prediction problems can be solved in linear time and memory by framing them as partial orderings of the tokens in the input string.
We demonstrate the effectiveness of our method on dependency parsing and coreference resolution, setting the new state-of-the-art accuracy in some cases and achieving significant efficiency improvements.\looseness=-1

\section{Limitations}

\subsection{Decoding Algorithms}
This work does not provide algorithms for particular structures or algorithms that ensure the well-formedness of structures, such as maximum spanning trees and projective trees.
It remains to be investigated whether existing constrained decoding algorithms for arc-factored models \cite[\emph{inter alia}]{chu-and-liu,edmonds,eisner-1997-bilexical} have their counterparts in the order-theoretic method.
We would like to explore decoding algorithms for structured prediction under order-theoretic formulation in future work.

\subsection{Interpretability}
In our method, the interactions between tokens are not directly modeled as in graph-based structured prediction models, which makes it more difficult to interpret the output of our model.
In addition, we leave to future work the investigation of the total ordering metrics (see \cref{app:qual-ex}) learned by the realizers in our method.

\subsection{Hardness of Learning}
Intuitively, it is harder to learn partial orders over strings than directly modeling the arcs in a graph, since our order-theoretic treatment has much fewer parameters when scoring token pairs. 
We also observed in our experiments that order-theoretic models take more training iterations to converge than arc-factored models. \looseness=-1

For instance, consider the modeling of a tree structure with $\seqlen$ nodes with $\seqlen - 1$ arcs using partial order, which implies $ \seqlen - 1 $ constraints of the form $ \wx \prec \wy$ and $ \seqlen^2 -2\seqlen+1 $ constraints of $ \wx \norel \wy$.  
From a theoretical perspective, $\orderdim = 2$ is sufficient to represent such a structure as shown in \cref{sec:order-theory}. 
In other words, there always exist 2 total orders whose intersection satisfies the aforementioned $\seqlen(\seqlen-1)$ constraints. 
However, it might not be easy to find such orders in practice. \looseness=-1 

A realizer with $\orderdim$ beyond 2 can more easily satisfy the constraints, especially of the form $\wx \norel \wy$---since there are more constraints of this form. 
It allows more possibilities for $\bigvee_{k \in \Nlessthan{\orderdim}} \rankfunck(\wx) \ge \rankfunck(\wy)$ (i.e., more choices of $k$ to satisfy the expression). 
On the other hand, using a small $\orderdim$ might make it harder to satisfy the constraints.

We plan to further investigate the hardness of learning a string partial order in future work.\looseness=-1

\subsection{Precision of floating-point numbers and numerical stability} \label{sec:lim-float}
Our method might be affected by the finite precision of floating-point numbers and numerical instability when applied to very long strings.
Although we did not encounter such issues in our experiments ($\seqlen \le 4096 = 2^{12}$),
issues might arise when $\seqlen > 65536 = 2^{16}$ if \texttt{bfloat16} or half precision is used.
In such extreme cases, our assumption that $\forall k \in \Nlessthan{\orderdim}, \rankfunck$ is injective cannot be fulfilled.
Thus, \emph{not all} totally ordered structures of $\seqlen$ elements can be represented, and our method might not exhibit the desired behavior. \looseness=-1

\section*{Ethics Statement}
We do not believe the work presented here further amplifies biases already present in the datasets and pretrained models. 
Therefore, we foresee no ethical concerns in this work.

\section*{Acknowledgments}
We would like to thank Zhaofeng Wu, Clément Guerner, and Tim Vieira for their invaluable feedback.
We are grateful to the anonymous reviewers for
their insightful comments and suggestions. Afra Amini is supported by ETH AI Center doctoral fellowship.
MS acknowledges support from the Swiss National Science Foundation (Project No. 197155), a Responsible AI grant by the Haslerstiftung; and an ETH Grant (ETH-19 21-1).

\bibliography{anthology,custom}
\bibliographystyle{acl_natbib}

\clearpage

\onecolumn

\appendix

\section{Related Work} \label{app:related-work}

\subsection{Ordinal Regression}
Ordinal regression is a family of problems that involve ranking a set of objects. 
Unlike classification, the label spaces in ordinal regression exhibit some natural ordering in its elements \cite{mccullagh-ordinal-regression}.
For instance, in information retrieval, a ranking model sorts a set of documents typically according to the document's relevance to the query.
Practically, ordinal regression can either be tackled as either regression or classification by treating the ranks as real-values
or the assignment to a particular rank value as a classification \cite{shawe-taylor_cristianini_2004}.

\subsection{Order Embeddings of Lexicons}
The notion of partial order has also been explored for learning word embeddings. 
The lexicons of natural languages exhibit hierarchical structures according to the concepts that the words represent \cite{miller-1994-wordnet}. 
For instance, `cat' and `dog' are `animal', `animal' and `plant' are `living thing'.
Order embeddings \cite{vendrov2015order,athiwaratkun2018on} propose to learn such property by learning embeddings that encode such partial order on the lexicon, resulting in improved performance on downstream tasks such as image caption retrieval.

\section{An Order-Theoretic Re-evaluation of \Cref{sec:illuminating-binary}} \label{app:revisit-binary-traversal}

    \begin{figure}[ht]
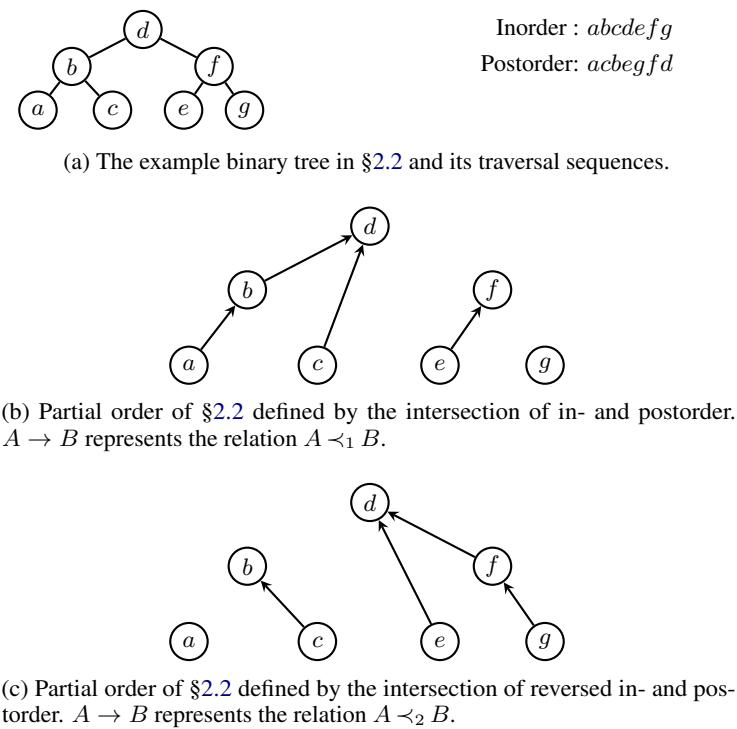

        \centering
        \begin{subfigure}{0.6\columnwidth}
            \setlength{\belowcaptionskip}{10pt}
            \begin{tabular}{p{0.59\columnwidth}r}
            \multirow{3}{0.58\columnwidth}{\extreetrav} 
            & {\small Inorder : $abcdefg$} \\
            & {\small Postorder: $acbegfd$} \\
            \end{tabular} \vspace{20pt}
            \caption{The example binary tree in \cref{sec:illuminating-binary} and its traversal sequences.}
            \label{fig:tree-trav-app-a}
        \end{subfigure}

        \begin{subfigure}{0.6\columnwidth}
            \setlength{\belowcaptionskip}{10pt}
            \centering
            \extreetravpoone
            \caption{Partial order of \cref{sec:illuminating-binary} defined by the intersection of in- and postorder. $A \to B$ represents the relation $A \rel_1 B$.}
            \label{fig:tree-trav-app-b}
        \end{subfigure}
        
        \begin{subfigure}{0.6\columnwidth}
            \setlength{\belowcaptionskip}{10pt}
            \centering
            \extreetravpotwo
            \caption{Partial order of \cref{sec:illuminating-binary} defined by the intersection of reversed in- and postorder. $A \to B$ represents the relation $A \rel_2 B$.}
            \label{fig:tree-trav-app-c}
        \end{subfigure}
        \caption{An order-theoretic re-evaluation of \cref{thm:binary-traversal}.}
        \label{fig:revisit-bin-trav}
    \end{figure}

\begin{restatable}[{\citealp[A binary tree and its traversal;][\S2.3.1, Ex.~7]{knuth-comp-programming}}]{theorem}{thmbintrav}\label{thm:binary-traversal}
    Given the inorder and either the pre- or postorder traversal of the vertices in a binary tree, the binary tree structure can be reconstructed.
\end{restatable}

\begin{proof}[Proof Sketch (order-theoretic)]
    Without loss of generality, we explain the case of the combination of in- and postorder. 
    $\vertices$ denotes the set of vertices in the binary tree.
    First, the intersection of in- and postorder defines a partial order relation $\poset_1 = \left( \vertices, \edges_1, \rel_1 \right)$.
    For any 2 vertices $\wx, \wy$ in the binary tree,
    $\wx \rel_1 \wy $ if and only if $\wx$ is a left descendant of $\wy$. I.e., $\wx$ is either the left child or a descendant of the left child of $\wy$ (see \cref{fig:tree-trav-app-b}).
    Since $\wx$ is visited before visiting $\wy$ in both inorder traversal and postorder traversal, if and only if $\wx$ is the \emph{left} descendant of $\wy$.
    The left child of each vertex in $\vertices$ can be decoded from $\poset_1$ by finding the child with the deepest subtree.
    Second, the intersection of \emph{reversed} inorder and postorder defines a partial order relation $\poset_2 = \left( \vertices, \edges_2, \rel_2 \right)$.
    For any 2 vertices $\wx, \wy$ in the binary tree,
    $\wx \rel_2 \wy $ if and only if $\wx$ is a right descendant of $\wy$ (see \cref{fig:tree-trav-app-c}).
    Since $\wx$ is visited before visiting $\wy$ in both the reversed inorder traversal and postorder traversal, if and only if $\wx$ is the \emph{right} descendant of $\wy$.
    The right child of each vertex in $\vertices$ can be decoded from $\poset_2$ also by finding the child with the deepest subtree.
    Thus, the original binary tree can be reconstructed.

\end{proof}

\section{Proofs on the Partially Ordered Properties of Structures}
\subsection{Proof of \cref{thm:vs-is-po}}
\label{sec:vs-is-po-proof}
\thmvsispo*
\begin{proof} 
    We show that a token-split structure $\poset = \left( \verticeshat, \edgeshat, \rel \right)$ satisfies all the properties of partially ordered structure defined in \cref{def:order}.

    \begin{enumerate}[noitemsep]
        \item irreflexivity: By \cref{def:token-split-4}, for all $\wx \in \verticeshat$, $\wx \norel \wx$.
        \item asymmetry: Suppose that $\exists \wx, \wy, \wx \neq \wy$, s.t. $\wx \rel \wy \land \wy \rel \wx$. By \cref{def:token-split-2,def:token-split-4}, $\wx, \wy \in \verticesr \cap \verticesb = \varnothing$. Thus, $\wx \rel \wy \implies \wy \norel \wx$.
        \item transitivity:  $\wx \rel \wy \land \wy \rel \wz$ cannot hold by \cref{def:token-split-4}. Since $\wx \rel \wy$ implies $\wx \in \verticesr \land \wy \in \verticesb$, while $\wy \rel \wz$ implies $\wy \in \verticesr \land \wx \in \verticesb$, a contradiction occurs due to $\wy \in \verticesr \cap \verticesb = \varnothing$ by \cref{def:token-split-2}. $\wx \rel \wy \land \wy \rel \wz \implies \wx \rel \wz$ holds since the antecedent of the proposition is always false.
    \end{enumerate}
    Thus, token-split structures are partially ordered.        
\end{proof}

\section{Guarantees of Order Dimension of Linguistic Structures} \label{app:efficiency-guarantees}

We justify the guarantees of order dimension of linguistic structures. One conventional way to characterize the dimension of partial orders is from a lattice-theoretic point of view.
A basic result tells us that a partial order is $2$-dimensional \emph{if and only if} its \defn{complete lattice embedding} has a \defn{planar Hasse diagram} \cite{baker-1972}. 
In other words, its complete lattice embedding can be drawn on a plane without any crossing edges.\looseness=-1

\begin{theorem}[{\citealp[Thm. 4.1]{baker-1972}}] \label{thm:planar-dim-2}
Suppose $\poset = \VERel$ is a partially ordered structure. Then the following are equivalent:
    \begin{enumerate}[noitemsep,label=\textnormal{(\alph*)},wide]
        \item $\orderdimfunc(\poset) \leq 2.$
        \item The complete lattice embedding of $\poset$ has a planar Hasse diagram.
    \end{enumerate}
\end{theorem}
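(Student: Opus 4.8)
The plan is to prove both implications by reducing to the special case of finite lattices and then establishing the lattice equivalence directly. Throughout, write $\overline{\poset}$ for the complete lattice embedding (the Dedekind--MacNeille completion) appearing in (b); being the completion of a finite poset it is itself finite and has a bottom $\hat{0}$ and a top $\hat{1}$. Two facts organize the reduction. First, restricting any realizer of a poset to a subset yields a realizer of the induced subposet, so $\orderdimfunc(\poset) \le \orderdimfunc(\overline{\poset})$ because $\poset$ sits inside $\overline{\poset}$ as a suborder. Second, at the threshold $2$ the completion does not increase the dimension: if $\orderdimfunc(\poset) \le 2$ then $\poset$ order-embeds into the dominance order on $\R^2$ (coordinatewise $\le$), and by the minimality of the Dedekind--MacNeille completion among complete lattices containing $\poset$ it embeds into the finite sublattice of $\R^2$ obtained by closing the image of $\poset$ under coordinatewise $\min$ and $\max$, so $\orderdimfunc(\overline{\poset}) \le 2$ as well. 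Hence (a) is equivalent to the lattice statement ``$\orderdimfunc(\overline{\poset}) \le 2$'', and it remains to show, for a finite lattice $L$, that $\orderdimfunc(L) \le 2$ iff $L$ has a planar Hasse diagram.

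For the forward direction I would turn a realizer into a drawing. Let $\{ \toset_1, \toset_2 \}$ realize $L$ and let $f_1, f_2 \colon \vertices \to \R$ record the rank of each element in $\toset_1$ and $\toset_2$, so that $\wx \rel \wy \iff f_1(\wx) < f_1(\wy) \land f_2(\wx) < f_2(\wy)$. Place each element at $(f_1(\wx), f_2(\wx))$ and draw every covering pair as a straight segment; each such segment runs monotonically up and to the right. The claim is that no two of these segments cross in their interiors, which makes the drawing a planar embedding of the Hasse diagram. The lattice structure is essential here: if the segments of two covering pairs $\wx \rel \wxprime$ and $\wy \rel \wyprime$ crossed, comparing coordinates at the crossing point forces $\wx \le \wyprime$ and $\wy \le \wxprime$, whence the join $\wx \vee \wy$ satisfies $\wx \le \wx \vee \wy \le \wxprime$ and $\wx \vee \wy \le \wyprime$; since $\wx \rel \wxprime$ is a cover we get $\wx \vee \wy \in \{ \wx, \wxprime \}$, and running through the two cases against the cover $\wy \rel \wyprime$ collapses two of $\wx, \wxprime, \wy, \wyprime$, contradicting the fact that crossing segments share no endpoint.

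For the converse I would extract a realizer from a planar upward drawing of $L$ with $\hat{0}$ at the bottom and $\hat{1}$ at the top. Such a drawing induces a left-to-right comparison of incomparable elements: for incomparable $\wx, \wy$ exactly one lies to the left of the other, giving a relation $\wx \sqsubset \wy$. The heart of the argument is to show that $\sqsubset$ is a transitive orientation of the incomparability graph of $L$; planarity is exactly what forbids a cyclic left--right conflict among mutually incomparable elements, since such a conflict would force two Hasse edges to cross. Given this, one builds two linear extensions by ordering comparable pairs according to $L$ and incomparable pairs $\wx \sqsubset \wy$ as $\wx <_1 \wy$ and $\wy <_2 \wx$; then $\wx \rel \wy \iff \wx <_1 \wy \land \wx <_2 \wy$, exhibiting $\{ <_1, <_2 \}$ as a realizer and giving $\orderdimfunc(L) \le 2$. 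Equivalently, the existence of the transitive orientation $\sqsubset$ is precisely the Dushnik--Miller criterion that a poset has dimension at most two iff its incomparability graph is a comparability graph, which one may cite to finish.

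The step I expect to be the main obstacle is this converse lattice direction: verifying that the left--right relation read off the diagram is genuinely transitive and antisymmetric (a conjugate order) rather than a merely local comparison, and checking that the resulting $<_1, <_2$ are total orders extending $L$. This needs a careful topological argument that any violation of transitivity yields an unavoidable edge crossing. A secondary technical point is justifying that the completion in (b) preserves the ``$\le 2$'' bound, for which I would rely on the minimality of the Dedekind--MacNeille completion and the explicit $\R^2$ model above.
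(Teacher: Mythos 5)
First, a point of comparison: the paper does not prove this theorem at all---it is imported verbatim from \citet{baker-1972} as a black box, so there is no internal proof to measure you against. Your reconstruction follows the classical route from that literature (realizer-to-drawing for one direction, a left--right conjugate order plus the Dushnik--Miller criterion for the other, with a Dedekind--MacNeille reduction in front), and the skeleton is sound. Your forward direction is in fact fully correct as sketched: the crossing of two cover segments forces $x < y'$ and $y < x'$ coordinatewise, the join $x \vee y$ is then squeezed into $\{x, x'\}$ by the covering relation, and either case places one endpoint strictly between the two elements of the other covering pair, contradicting that it is a cover; one should also record the small degeneracy check that no vertex can lie in the interior of a cover segment (coordinatewise betweenness would contradict covering), which simultaneously rules out collinear overlaps of edges sharing an endpoint. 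Your observation that the lattice hypothesis is what powers this direction is exactly right---the height-one poset with all minimal elements below all maximal elements has dimension $2$ but a $K_{3,3}$ Hasse diagram, so the implication fails for general posets.

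Two steps do need repair or completion. First, your justification of the reduction via ``minimality of the Dedekind--MacNeille completion among complete lattices containing $\poset$'' is not a correct statement as written: the completion is not a sublattice of every complete lattice containing $\poset$. What is true, and what you need, is that it \emph{order-embeds} into every complete lattice $L$ into which $\poset$ order-embeds; this follows from join- and meet-density by sending each cut $c$ to $\sup_L f(\{p \in \poset : p \le c\})$ and using that $c_1 \not\le c_2$ yields $p \le c_1$, $q \ge c_2$ with $p \not\le q$, so the map reflects order. Since dimension at most $k$ is equivalent to order-embeddability into a product of $k$ chains, this gives $\orderdimfunc$ of the completion equals $\orderdimfunc(\poset)$ for every $k$, which is cleaner than your $\R^2$-specific closure argument. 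Second, and more seriously, the converse's crux---that the left-of relation read off an upward planar drawing of a bounded lattice is a genuine conjugate order (irreflexive, antisymmetric, transitive, and defined on exactly the incomparable pairs)---remains unproven in your sketch. You flagged this yourself, but note the gap starts earlier than transitivity: even the claim that for incomparable $x, y$ ``exactly one lies to the left of the other'' requires the $\hat{0}$--$\hat{1}$ structure (one separates the plane by a maximal chain through $x$ and argues via the Jordan curve theorem which side $y$ falls on), and the transitivity argument is a careful topological case analysis in the style of Kelly and Rival, not a one-line appeal to ``crossings.'' Your fallback of citing the Dushnik--Miller comparability-graph criterion correctly finishes the combinatorial half (and your verification that a transitive orientation $C$ makes both $\rel \cup\, C$ and $\rel \cup\, C^{-1}$ linear extensions is the right lemma), but the topological lemma is the actual content of Baker's theorem and would have to be supplied for the proof to be complete.
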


\begin{remark}
\citet{macneille-1937} and \citet[Chapter 5]{birkhoff-1967-lattice} introduced the construction of complete lattice embeddings for any partial order.
Although it is difficult in practice to compute the complete lattice embedding for a partially ordered structure \cite{macneille-1937}, \Cref{thm:planar-dim-2} can still provide an empirical characterization of the class of structures that can be efficiently represented.
According to Euler's formula, the average degree of a vertex in a planar graph cannot exceed 6 \cite[\S 6.1.23]{graph-theory}, which intuitively forces the partially ordered structures that can be represented as an intersection of 2 totally ordered structures to be sparse enough---thus to have planar complete lattice embeddings.

Fortunately, this is often the case in natural language.
Such phenomenon is closely related to what is termed \defn{valency} by \citet[Part 1, Book D]{tesnière}.
The number of actants (i.e., arguments) needed to implement the function of a word is a property of the word itself---a constant that does not change with the context (cf. \defn{categories}\footnote{E.g., the English word ``\textit{give}'' may have the category (VP/NP)/NP, meaning that it needs two NP categories to the right to form a VP. An example is the verb phrase ``\textit{give me an apple}'', in which ``\textit{me}'' and ``\textit{an apple}'' are noun phrases.} in categorial grammars \cite{Ajdukiewicz-35,bar-hillel-ccg,steedman-ccg}). 
In natural language, the valency of a word is often a small constant. 
For instance, \citet[Chapter 3, fn. 10 and Chapter 8, p. 212]{syntactic-process} observes that the highest valency in the Dutch and English lexicon can be regarded as bounded by $4$.
\end{remark}

We refer interested readers to \citet{macneille-1937} and \citet[Chapter 5]{birkhoff-1967-lattice} for the construction of complete lattice embeddings.
Here, we give a weaker but more practical efficiency guarantee, based on a method to construct large partially ordered structures from smaller partially ordered structures.\looseness=-1
\begin{definition}[{\citealp[Series-Parallel Partial Orders;][]{valdes-tarjan-lawler}}] \label{def:series-parallel-posets}
A partially ordered structure is \defn{series-parallel} if it satisfies the following inductive definition:\looseness=-1
    \begin{deflist}[noitemsep,nosep,label=\textnormal{(\alph*)},ref=\textnormal{\thetheorem\,(\alph*)}]
        \item A single-vertex structure with no edges is series-parallel; 
        \item If partially ordered structures $\poset_{1} = \left( \vertices_{1}, \edges_{1}, \rel \right) $ and $\poset_{2} = \left( \vertices_{2}, \edges_{2}, \rel \right)$ are series-parallel,
        so is the partially ordered structures constructed by \emph{either} of the following operations: \label{def:s-p-posets-multiple}
        \begin{deflist}[noitemsep,nosep,label=\textnormal{\roman*.},ref=\textnormal{\thetheorem\,\alph{deflisti}.\roman*},leftmargin=8pt,labelindent=20pt]
            \item Parallel composition: \\  $\posetp = \left( \vertices_{1} \cup \vertices_{2}, \edges_{1} \cup \edges_{2}, \rel \right)$. \label[definition]{def:s-p-posets-parallel}
            \item Series composition: \\  $ \posets = \left( \vertices_{1} \cup \vertices_{2}, \edges_{1} \cup \edges_{2} \cup (\sink_1\!\times\!\source_2), \rel \right) $, 
            where $\sink_1$ is the set of sinks of $\poset_{1}$ and $\source_2$ the set of sources of $\poset_{2}$.\footnote{Sources and sinks refer to the vertices without incoming arcs and without outgoing arcs, respectively.}
        \end{deflist}
    \end{deflist}
\end{definition}

\begin{theorem}[{\citealp[Series-parallel partially ordered structures are 2-dimensional;][]{valdes-tarjan-lawler}}] \label{thm:series-parallel-2d}
    The dimension of series-parallel partially ordered structures is at most 2.\looseness=-1
\end{theorem}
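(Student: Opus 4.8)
The plan is to argue by structural induction on the inductive construction of series-parallel partially ordered structures given in \cref{def:series-parallel-posets}. Rather than bound the order dimension directly, I would prove the concrete statement that every series-parallel structure $\poset$ admits a realizer (\cref{def:realizer}) of size two, i.e., two total orders—linear extensions of $\poset$—whose intersection recovers $\edges$ exactly; exhibiting such a realizer immediately gives $\orderdimfunc(\poset) \le 2$. The base case is a single vertex with no edges, which is realized by a single total order that we may simply duplicate to obtain a realizer of size two. Throughout, I write $L \frown L'$ for the total order on $\vertices_1 \cup \vertices_2$ that orders each block internally by $L$ and $L'$ and places the entire first block below the entire second; since $\vertices_1 \cap \vertices_2 = \varnothing$, this is well defined.

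For the inductive step, suppose $\poset_1$ has realizer $\{L_1, L_1'\}$ and $\poset_2$ has realizer $\{L_2, L_2'\}$. For parallel composition (\cref{def:s-p-posets-parallel}), which is the disjoint union carrying no relation between the two blocks, I would take the realizer $\{\, L_1 \frown L_2,\ L_2' \frown L_1' \,\}$—the two total orders place the blocks in opposite relative orders. A case split on whether a pair lies inside $\vertices_1$, inside $\vertices_2$, or across the blocks then finishes it: within a block the intersection collapses to $\poset_1$ (resp.\ $\poset_2$) by the inductive hypothesis, while each cross pair is ordered one way in the first total order and the reverse way in the second, hence is incomparable in the intersection, exactly matching the absence of cross relations.

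For series composition I would first observe that, because $\rel$ is transitive (\cref{def:order}), closing $\edges_1 \cup \edges_2 \cup (\sink_1 \times \source_2)$ under transitivity makes every vertex of $\poset_1$ precede every vertex of $\poset_2$: each vertex of $\vertices_1$ reaches some sink in $\sink_1$ and each vertex of $\vertices_2$ is reached from some source in $\source_2$. I would then use the realizer $\{\, L_1 \frown L_2,\ L_1' \frown L_2' \,\}$, where now \emph{both} total orders keep the $\poset_1$ block below the $\poset_2$ block. The same within/across case analysis shows the intersection agrees with $\poset_1$ and $\poset_2$ inside the blocks and contains every cross pair, which is precisely the series order.

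I expect the only genuine content to lie in two checks: that the block-wise concatenation $L \frown L'$ is itself a linear extension of the composite structure (so that the two total orders are legitimate members of a realizer), and that the opposite-order placement in the parallel case really annihilates all cross relations while the same-order placement in the series case preserves all of them. The one subtlety worth flagging is the reading of the series composition in \cref{def:series-parallel-posets}: since $\rel$ is an order, its edge set must be taken up to transitive closure, which is what turns $\sink_1 \times \source_2$ into the full product $\vertices_1 \times \vertices_2$; getting this right is what makes the series case go through. Degenerate components of dimension one are absorbed harmlessly by allowing the two total orders of a realizer to coincide.
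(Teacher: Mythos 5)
Your proof is correct, but there is nothing in the paper to compare it against: the paper never proves \cref{thm:series-parallel-2d}, citing it instead to \citet{valdes-tarjan-lawler}, and only the downstream facts (\cref{prop:tree-2d}, \cref{prop:forest-2d}) get one-line arguments. Your structural induction---realizing parallel composition by concatenating the inherited pairs of linear extensions with the two blocks in \emph{opposite} relative orders, and series composition by concatenating them with the blocks in the \emph{same} order---is precisely the classical construction behind the cited result, so you have supplied the omitted argument rather than deviated from one. Two of your checks deserve emphasis. First, your insistence on reading \cref{def:series-parallel-posets} up to transitive closure is a genuine repair of the definition as printed: the edge set $\edges_1 \cup \edges_2 \cup (\sink_1 \times \source_2)$ is not transitive whenever some $\wx \in \vertices_1$ lies strictly below a sink $s \in \sink_1$, since then $\wx \rel s$ and $s \rel m$ for a source $m \in \source_2$ but $(\wx, m)$ is absent, so without closure the composite fails \cref{def:order} and is not a partially ordered structure at all. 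Your claim that the closure equals $\edges_1 \cup \edges_2 \cup (\vertices_1 \times \vertices_2)$ is right, but note it silently uses finiteness (every element of a \emph{finite} strict order sits below some maximal element and above some minimal one); this is harmless here because structures are defined over finite strings, but it is worth stating. Second, you are right that verifying $\edges = \edges_1' \cap \edges_2'$ by the within-block/cross-block case split is the whole job: the paper's \cref{def:realizer} only demands that the intersection recover $\edges$, and containment $\edges \subseteq \edges_k$ (i.e., that each concatenation is a linear extension) follows automatically from that identity, so your separately flagged ``linear extension'' check is subsumed. The only cosmetic caveat is that a realizer is a \emph{set} of total orders, so in the degenerate base case your ``duplicated'' total order yields a realizer of cardinality one---which is fine, since the theorem asserts an upper bound of $2$ on $\orderdimfunc(\poset)$.
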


\Cref{thm:series-parallel-2d} provides the guarantee that many structures in natural language processing can be represented as the intersection of 2 totally ordered structures.
Since most structures of interest in NLP, such as trees and forests (thereby alignments and set partitioning), can be subsumed under series-parallel partially ordered structures, therefore have an order dimension of at most 2.

\begin{proposition}[{\citealp[Trees are 2-dimensional;][]{lawler-trees-series-parallel}}] \label{prop:tree-2d}
    Directed tree partially ordered structures are series-parallel. 
    The order dimension of tree structures is at most 2.
\end{proposition}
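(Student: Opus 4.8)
The plan is to reduce the claim to the already-established \cref{thm:series-parallel-2d}: once we show that every directed tree partially ordered structure is series-parallel in the sense of \cref{def:series-parallel-posets}, the bound $\orderdimfunc \le 2$ follows immediately. Throughout, I take the tree partial order to be the ancestor relation of a rooted tree, i.e.\ $x \rel y$ iff $x$ is a proper ancestor of $y$, so that the root is the unique minimum. This is the transitive relation induced by the tree's parent--child edges, and it is the only reading under which a tree is a genuine partially ordered structure in the sense of \cref{def:partial-order}.

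I would set up a structural induction on the tree. The base case is a single vertex, which is series-parallel by \cref{def:series-parallel-posets}. For the inductive step, let the tree be rooted at $\rho$ with children $c_1, \ldots, c_k$, and let $T_1, \ldots, T_k$ be the subtrees hanging from these children, with $T_i$ rooted at $c_i$. By the induction hypothesis each $T_i$ is series-parallel. I would then form the forest $F = T_1 \parallel \cdots \parallel T_k$ by iterated parallel composition (\cref{def:series-parallel-posets}); this is well-defined because parallel composition is disjoint union and is manifestly associative. Finally I would obtain the whole tree as the series composition $\posets = \{\rho\} \circ F$ (\cref{def:series-parallel-posets}, series case) with the single-vertex structure $\{\rho\}$ as the lower operand.

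The crux is to verify that this construction reproduces exactly the ancestor order of the tree---no more and no fewer comparabilities. The key checkpoints are: (i) the sources of $F$ are precisely $\{c_1, \ldots, c_k\}$, since the root-minimal $T_i$ has $c_i$ as its unique source, and the sole sink of $\{\rho\}$ is $\rho$ itself, so the series step inserts exactly the cover edges $\rho \to c_i$, which after transitive closure place $\rho$ below every node, making it the unique minimum; (ii) parallel composition introduces no edges between distinct $T_i$ and $T_j$, and the series step only adds downward edges out of $\rho$, so for $x \in T_i$ and $y \in T_j$ there is no directed path between $x$ and $y$, and they remain incomparable, exactly as in the tree; and (iii) within each $T_i$ the order is untouched, so ancestry inside subtrees is preserved by the induction hypothesis.

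The main obstacle I anticipate is reconciling the edge-set bookkeeping of \cref{def:series-parallel-posets} with the transitivity requirement of \cref{def:partial-order}: the series rule inserts only $\sink_1 \times \source_2$ (sinks-to-sources) rather than all of $\vertices_1 \times \vertices_2$, so I must argue that the intended partial order is the transitive closure of the composed edge set and check that this closure coincides with the ancestor relation (every node lies below some sink of $\{\rho\}$ and above some source of $F$, forcing $\rho \rel x$ for all $x$). Once this identification is pinned down, the induction closes and the dimension bound $\orderdimfunc \le 2$ is immediate from \cref{thm:series-parallel-2d}.
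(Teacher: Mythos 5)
Your proof is correct, but it is worth noting that the paper does not actually contain a proof of \cref{prop:tree-2d}: the proposition is stated as a cited result of \citet{lawler-trees-series-parallel}, and only the forest corollary (\cref{prop:forest-2d}) receives an in-paper one-line proof, namely parallel composition of trees. Your structural induction---single vertex as the base case, parallel composition of the child subtrees $T_1, \ldots, T_k$, then series composition with the singleton root $\{\rho\}$ as the lower operand---is essentially the standard argument from the cited literature, so in effect you have filled in the proof that the paper delegates to the citation, and your argument composes cleanly with the paper's forest proof. Two points in your write-up deserve emphasis because they go beyond mere bookkeeping. First, you pin down the correct reading of the statement: a directed tree with bare parent--child arcs is not transitive, so the object satisfying \cref{def:partial-order} must be the ancestor order with the root as unique minimum, exactly as you set it up. Second, you identify and repair a genuine infelicity in \cref{def:series-parallel-posets}: the series rule inserts only $\sink_1\!\times\!\source_2$, so the composed edge set is in general not transitively closed (with $P_1 = \{\rho\}$ one obtains only the cover arcs $\rho \to c_i$ to the children, not arcs to deeper descendants), and the composition yields a partially ordered structure at all only if the edge set is understood as generating the order via transitive closure---which is the convention of \citet{valdes-tarjan-lawler}. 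Under that reading, your checkpoints (i)--(iii) verify that the closure coincides exactly with the ancestor order (sources of the parallel composition are the union of the subtree sources $\{c_1, \ldots, c_k\}$; no cross-subtree path exists since $\rho$ has only outgoing arcs; subtree orders are untouched), and the bound $\orderdimfunc_{\poset} \le 2$ then follows immediately from \cref{thm:series-parallel-2d}, as you say.
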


\begin{proposition}[Forests are 2-dimensional] \label{prop:forest-2d}
    Forests are series-parallel. 
    The order dimension of forest structures is at most 2.
\end{proposition}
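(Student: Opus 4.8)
The plan is to obtain the proposition directly from the two results already established for trees and for series-parallel structures, exploiting the fact that a forest is simply a disjoint union of trees.

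First I would recall that, by \cref{prop:tree-2d}, every directed tree partially ordered structure is series-parallel. A forest is by definition a disjoint union of trees $T_1, \ldots, T_m$ with pairwise disjoint vertex sets, so the partially ordered structure it induces is the disjoint union of the individual tree posets: no vertex belonging to one tree is comparable under $\rel$ to any vertex in a different tree. The key observation is that this disjoint union is exactly the output of the parallel composition operation of \cref{def:s-p-posets-parallel}, which forms $(\vertices_1 \cup \vertices_2, \edges_1 \cup \edges_2, \rel)$ and introduces no arcs between the two components.

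Then I would argue by induction on the number $m$ of trees in the forest. The base case $m=1$ is immediate from \cref{prop:tree-2d} (and a single vertex is series-parallel by the base clause of \cref{def:series-parallel-posets}). For the inductive step, suppose $T_1 \cup \cdots \cup T_{m-1}$ is series-parallel; since $T_m$ is series-parallel by \cref{prop:tree-2d} and its vertex set is disjoint from those of the preceding trees, their parallel composition yields $T_1 \cup \cdots \cup T_m$, which is therefore series-parallel by \cref{def:s-p-posets-multiple}. This establishes that forests are series-parallel. The order-dimension bound then follows at once: by \cref{thm:series-parallel-2d}, every series-parallel partially ordered structure has order dimension at most $2$, so forests do as well.

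The only point requiring care—rather than a genuine obstacle—is the claim that the forest poset coincides with the iterated parallel composition of its tree posets. This reduces to checking that no spurious comparabilities arise across distinct trees, which holds precisely because a forest has no arcs between its connected components, matching the fact that parallel composition adds no cross-component edges. Once this is verified, both the induction and the appeal to \cref{thm:series-parallel-2d} are routine.
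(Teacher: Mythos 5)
Your proof is correct and follows exactly the paper's own argument: the paper also establishes the proposition by observing that forests are parallel compositions of trees and invoking \cref{prop:tree-2d} together with \cref{thm:series-parallel-2d}. Your version merely spells out the induction on the number of trees and the disjointness check that the paper leaves implicit.
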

\begin{proof}
    Forests are parallel compositions of trees.
    Thus, the proposition holds. 
\end{proof}

\section{Efficient Algorithm for $\oplus$-Aggregation} \label{app:decoding-general}

\subsection{Correctness of \cref{alg:efficient-computation}}
\efficientcomputation*
\begin{proposition}
    In \cref{alg:efficient-computation}, $G_1 = \bigoplus_{x \in \vertices} \bigoplus_{y \in\calS_{1}(\wx)} \left(\rankfunc^{(1)}(\wx) -\rankfunc^{(1)}(\wy) \right) $.
\end{proposition}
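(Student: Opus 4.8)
The plan is to prove correctness by exhibiting a loop invariant on the two accumulators $G_1$ and $s_1$ and then reading off the returned value, with the sorting identity recorded just before the algorithm supplying the bridge to $\calS_1$. Let $\sortedvertices_1, \dots, \sortedvertices_{\seqlen}$ denote the vertices in the order returned by the sort on \cref{line:alg-1-sort}, i.e.\ ascending in the key $\kappa(\wx) \defeq \rankfunc^{(1)}(\wx) - \rankfunc^{(2)}(\wx)$. I would prove by induction on the loop counter $n$ that immediately after iteration $n$ of the loop (\crefrange{line:alg-1-for-1}{line:alg-1-for-4}) the accumulators satisfy
\[
s_1 = \bigoplus_{m=1}^{n} \bigl(-\rankfunc^{(1)}(\sortedvertices_m)\bigr),
\qquad
G_1 = \bigoplus_{j=1}^{n} \bigoplus_{m=1}^{j-1} \bigl(\rankfunc^{(1)}(\sortedvertices_j) - \rankfunc^{(1)}(\sortedvertices_m)\bigr).
\]
Reading this off at $n=\seqlen$ and re-indexing the outer aggregation over $\vertices$ then yields the claim.

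The inductive step rests on distributivity of $+$ over $\oplus$ together with the semiring annihilator property $\rr + \bzero = \bzero$ (valid in both the $\logsumexp$ and the $\min$-plus instantiations used for training and decoding). Entering iteration $n$, the invariant for $n-1$ gives $s_1 = \bigoplus_{m=1}^{n-1}(-\rankfunc^{(1)}(\sortedvertices_m))$; \cref{line:alg-1-for-2} forms $q_1 = \rankfunc^{(1)}(\sortedvertices_n) + s_1$, and distributing $\rankfunc^{(1)}(\sortedvertices_n)$ across the aggregation gives $q_1 = \bigoplus_{m=1}^{n-1}(\rankfunc^{(1)}(\sortedvertices_n) - \rankfunc^{(1)}(\sortedvertices_m))$, with the base case $n=1$ collapsing to $q_1 = \rankfunc^{(1)}(\sortedvertices_1) + \bzero = \bzero$ by annihilation. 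The updates $G_1 \gets G_1 \oplus q_1$ and $s_1 \gets s_1 \oplus (-\rankfunc^{(1)}(\sortedvertices_n))$ on \cref{line:alg-1-for-3,line:alg-1-for-4} then advance both halves of the invariant from $n-1$ to $n$. This is exactly where the amortized $\bigo{1}$ per-step cost comes from: the running accumulator $s_1$ lets $q_1$ be produced without re-summing.

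The remaining ingredient identifies, for each $\sortedvertices_j$, the index range $\{\sortedvertices_m : m < j\}$ with $\calS_1(\sortedvertices_j)$. This is the algebraic fact stated before the algorithm: $\wy \in \calS_1(\wx)$ iff $\rankfunc^{(1)}(\wx) - \rankfunc^{(1)}(\wy) \ge \rankfunc^{(2)}(\wx) - \rankfunc^{(2)}(\wy)$, which rearranges to $\kappa(\wy) \le \kappa(\wx)$. Assuming the keys $\kappa(\wx)$ are pairwise distinct (generically true for the real-valued outputs of the realizer; cf.\ the injectivity assumption and the caveat in \cref{sec:lim-float}), this is equivalent to $\wy$ occurring strictly before $\wx$ in $\sortedvertices$. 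Substituting this equivalence into the $n=\seqlen$ invariant rewrites $\bigoplus_{j}\bigoplus_{m<j}$ as $\bigoplus_{\wx\in\vertices}\bigoplus_{\wy\in\calS_1(\wx)}$, completing the argument.

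The step I expect to require the most care is this sort-to-$\calS_1$ correspondence, for two reasons. First, the diagonal pair $\wy=\wx$ lies in $\calS_1(\wx)$ by definition (both orders tie at $\pairFunc(\wx,\wx)=0$), yet the loop aggregates only over $m<j$ and hence omits it; I would address this by reading $\calS_1(\wx)$ in the statement as the strict predecessors, noting that the single omitted term $\rankfunc^{(1)}(\wx)-\rankfunc^{(1)}(\wx)$ is the real number $0$ rather than the aggregation identity $\bzero$, and that it is the only discrepancy and is reconciled consistently once $G_1$ is combined with $G_2$. Second, injectivity of each $\rankfunck$ does \emph{not} by itself force $\kappa$ to be injective, so I would make the genericity of the key explicit (ties in $\kappa$ would place a pair in both $\calS_1$ and $\calS_2$ and must be excluded for the clean correspondence to hold); the symmetric statement for $G_2$ follows by running the identical argument on the reverse sort.
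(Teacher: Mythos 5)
Your proposal is correct and follows essentially the same route as the paper's proof: the identical loop-invariant induction on the iteration counter, with distributivity of $+$ over $\oplus$ driving the inductive step and the sorted-key identity $\calS_{1}(\sortedvertices_{n})=\{\sortedvertices_{1},\dots,\sortedvertices_{n-1}\}$ supplying the bridge (the paper phrases its $s_1$ invariant as $\bigoplus_{\wy\in\calS_{1}(\sortedvertices_{n+1})}-\rankfunc^{(1)}(\wy)$, which coincides with your index form). Your two flagged subtleties---that the diagonal $\wx\in\calS_{1}(\wx)$ is omitted by the strict-predecessor loop, and that injectivity of each $\rankfunck$ does not by itself exclude ties in the key $\rankfunc^{(1)}-\rankfunc^{(2)}$---are boundary cases the paper passes over by simply citing \cref{sec:efficient-algorithm}, so on those points your write-up is slightly more careful than the original.
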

\begin{proof}
    By induction, we show that upon finishing step $n$, $s_1 = \bigoplus_{\wy \in\calS_{1}( \sortedvertices_{n+1})} -\rankfunc^{(1)}(\wy)  $, $G_1 = \bigoplus_{\wx \in \{ \sortedvertices_{1}, \cdots, \sortedvertices_{n} \} } \bigoplus_{\wy \in\calS_{1}(\wx)} \left(\rankfunc^{(1)}(\wx) -\rankfunc^{(1)}(\wy) \right) $. 
    First, $\calS_{1}( \sortedvertices_{n}) = \{ \sortedvertices_{1}, \cdots, \sortedvertices_{n-1} \} $ holds as discussed in \cref{sec:efficient-algorithm}.
    When $n = 1$, we have $s_1 = -\rankfunc^{(1)}(\sortedvertices_1) $, $G_1 = \bzero =  \bigoplus_{\wx \in \{ \sortedvertices_{1} \} } \bigoplus_{y \in\calS_{1}(\wx)} \left(\rankfunc^{(1)}(\wx) -\rankfunc^{(1)}(\wy) \right) $, since $\calS_{1}(\sortedvertices_{1}) = \varnothing$.
    Assume that our statements hold for $n = j$, when $n = j + 1$, it is straightforward that $s_1 = \bigoplus_{\wy \in\calS_{1}( \sortedvertices_{j+2})} -\rankfunc^{(1)}(\wy)$.
    For $G_1$, we have
    \begin{subequations}
        \begin{align}
            G_1 &= \bigoplus_{\wx \in \{ \sortedvertices_{1}, \cdots, \sortedvertices_{j} \} } \bigoplus_{\wy \in\calS_{1}(\wx)} \mleft( \rankfunc^{(1)}(\wx) -\rankfunc^{(1)}(\wy) \mright) \oplus \mleft( \rankfunc^{(1)}(\sortedvertices_{j+1}) + \bigoplus_{\wy \in\calS_{1}( \sortedvertices_{j+1})} -\rankfunc^{(1)}(\wy) \mright) \\
             &= \bigoplus_{\wx \in \{ \sortedvertices_{1}, \cdots, \sortedvertices_{j} \} } \bigoplus_{\wy \in\calS_{1}(\wx)} \mleft(\rankfunc^{(1)}(\wx) -\rankfunc^{(1)}(\wy) \mright) \oplus \bigoplus_{\wy \in\calS_{1}( \sortedvertices_{j+1})} \mleft( \rankfunc^{(1)}(\sortedvertices_{j+1}) -\rankfunc^{(1)}(\wy) \mright) \\
             &= \bigoplus_{\wx \in \{ \sortedvertices_{1}, \cdots, \sortedvertices_{j+1} \} } \bigoplus_{\wy \in\calS_{1}(\wx)} \mleft(\rankfunc^{(1)}(\wx) -\rankfunc^{(1)}(\wy) \mright)
        \end{align}
    \end{subequations}
    Thus, the claims hold for $n = j+1$, establishing the induction step.
\end{proof}

\begin{proposition}
    \cref{alg:efficient-computation} runs in $\bigo{\seqlen}$ time and space.
    With parallel computing, \cref{alg:efficient-computation} runs in $\bigo{\log \seqlen}$ span.
\end{proposition}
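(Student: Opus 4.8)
The plan is to treat the sequential bounds and the parallel span separately, since they rest on different observations about \cref{alg:efficient-computation}.

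For the sequential claims, I would first account for the sort on \cref{line:alg-1-sort}: as noted in the footnote accompanying \cref{sec:efficient-algorithm}, sorting $N$ real keys can be carried out in $\bigo{\seqlen}$ time by radix sort. The remaining work is the single \texttt{for} loop spanning \crefrange{line:alg-1-for-1}{line:alg-1-for-4}, which executes exactly $N$ times and performs only a constant number of $\oplus$-combinations, real additions, and array lookups per iteration---each $\bigo{1}$ under the standard assumption that $\oplus$ and $+$ cost constant time on a machine word. Summing over iterations and adding the sort yields $\bigo{\seqlen}$ time. For space, I would observe that the only non-constant storage is the sorted array $\sortedvertices$ of length $N$; the accumulators $G_1, s_1, q_1$ occupy $\bigo{1}$, so the total is $\bigo{\seqlen}$.

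For the parallel span, the key is to recognize that the loop is an instance of the classical \emph{prefix-scan-then-reduce} pattern. Tracking the variable $s_1$, the value read when $q_1$ is formed in iteration $n$ equals the exclusive prefix $\bigoplus_{m < n} -\rankfunc^{(1)}(\sortedvertices_m)$ of the sequence $\big(-\rankfunc^{(1)}(\sortedvertices_m)\big)_{m=1}^{N}$ under $\oplus$. Hence I would compute all $N$ such prefixes $s_1^{(n)}$ at once with a parallel $\oplus$-scan, form each summand $q_1^{(n)} = \rankfunc^{(1)}(\sortedvertices_n) + s_1^{(n)}$ pointwise, and finally take the $\oplus$-reduction $G_1 = \bigoplus_n q_1^{(n)}$. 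Because $\oplus$ is associative and commutative (the defining properties of an aggregation operator) and $+$ distributes over $\oplus$, both the scan and the reduction are well-defined and each runs in $\bigo{\log \seqlen}$ span via the standard work-efficient scan algorithm. Combined with a parallel sort of $\bigo{\log \seqlen}$ span, the whole procedure has $\bigo{\log \seqlen}$ span.

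The main obstacle I anticipate is the parallel-span argument rather than the sequential bound: one must justify rigorously that the inherently sequential accumulation written in the pseudocode can be re-expressed as an associative scan plus reduction, and this is precisely where the algebraic hypotheses---associativity and commutativity of $\oplus$, together with distributivity of $+$ over $\oplus$---become indispensable. A secondary subtlety is the cost model for sorting: claiming $\bigo{\log \seqlen}$ span requires appealing to a parallel sorting network rather than to radix sort, so I would either cite such a network or state the parallel-sort assumption explicitly.
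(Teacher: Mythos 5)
Your proof is correct and takes essentially the same route as the paper's: a line-by-line $\bigo{\seqlen}$ accounting of the sort (via radix sort, per the paper's footnote) and the loop for the sequential time and space bounds, followed by recasting the accumulation of $s_1$ as a parallel prefix-scan with the remaining per-step work folded into pointwise operations and a final reduction, giving $\bigo{\log \seqlen}$ span. If anything, you are more rigorous than the paper, which asserts the prefix-sum parallelization without spelling out that associativity of $\oplus$ licenses the scan-then-reduce rewriting (and describes the final accumulation of $G_1$ as $\bigo{1}$ per step rather than as an explicit $\bigo{\log \seqlen}$ reduction), and which is silent on the parallel span of the sorting step---a genuine subtlety you rightly flag.
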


\begin{proof}
    The sorting step in \cref{line:alg-1-sort} can be executed in $\bigo{\seqlen}$ time and space.
    The for loop in \crefrange{line:alg-1-for-1}{line:alg-1-for-4} runs in $\bigo{\seqlen}$ time and space. 
    In total, \cref{alg:efficient-computation} runs in $\bigo{\seqlen}$ time and space. 
    Computing $s_1$ in each step is a prefix-sum of $-\rankfunc^{(1)}(U_n)$, which can be done in $\bigo{\log \seqlen}$ span with parallel computing.
    $q_1, G_1$ in each step can be computed in $\bigo{1}$ in parallel following the computation of all $s_1$. 
    Thus, the total span of \cref{alg:efficient-computation} is $\bigo{\log \seqlen}$.
\end{proof}

\subsection{Order Dimension $\orderdim > 2$} \label{app:dim-gt-2} %

Finding all $\wy \in \vertices$ such that $\wx \rel \wy$ in a partial order for a given $\wx \in \vertices$ requires efficiently finding all $\wy$ that satisfy $\bigwedge_{k \in \Nlessthan{\orderdim}} (\rankfunck(\wx) < \rankfunck(\wy))$.
We remark that this problem bears a resemblance to \defn{orthogonal range searching} in a $\orderdim$-dimensional space \cite[Chapter 5]{comp-geometry}, i.e., for a given $\wx$, we aim to find all $\wy$ such that $(\rankfunc^{(1)}(\wy), \rankfunc^{(2)}(\wy), \cdots, \rankfunc^{(\orderdim)}(\wy))$ is within the range $ (\rankfunc^{(1)}(\wx), \infty) \times (\rankfunc^{(2)}(\wx), \infty) \times \cdots \times (\rankfunc^{\orderdim}(\wx), \infty) $.
This problem can be naïvely solved in $\bigo{\log^{\orderdim-1} \seqlen + \ell}$ using a \defn{range tree} \cite{range-tree-bentley,multidim-bentley,semigroup-chazelle,range-tree-1-chazelle,range-tree-2-chazelle}, where $\ell$ is the cardinality of query results, as opposed to arc-factored models in which solving the same problem takes $\bigo{\seqlen}$ computations.

For $\oplus$-aggregation, a more efficient algorithm which makes use of $(\orderdim\!-\!1)$-dimensional range trees can be designed. 
In future work, we show that computing the complexity of $\oplus$-aggregation for \emph{all} $\wx \in \vertices$ can be further reduced to $\bigo{\orderdim \seqlen \log^{\orderdim-2} \seqlen}$ by applying \citeposs{fredman-trick} trick which we used in \cref{alg:efficient-computation}.
Extending the notation in \cref{sec:efficient-algorithm}, the set of all vertices $\vertices$ can be partitioned into $\orderdim$ subsets $\calS_{1}(\wx), \cdots, \calS_{\orderdim}(\wx)$ for each $\wx \in \vertices$, where $\calS_{k}(\wx) = \{ \wy \mid \wy \in \vertices \land \pairFunc(\wx, \wy) = \rankfunc^{(k)}(\wx) -\rankfunc^{(k)}(\wy) \} $. $\semiringplus_{\wy \in \vertices} \pairFunc(\wx, \wy)$ can be decomposed into a $\oplus$-aggregation of $\orderdim$ terms.
\begin{subequations}
\begin{align}
    G(\wx)  &\defeq  \semiringplus_{\wy \in \vertices} \pairFunc(\wx, \wy) \\ %
    G(\wx)  &= \semiringplus_{k \in \Nlessthan{\orderdim}} \underbrace{\mleft( \semiringplus_{\wy \in \calS_{k}} \pairFunc(\wx, \wy) \mright)}_{\defeq G_k(\wx)}
\end{align}
\end{subequations}
We leave to future work showing that computing each $G_k(\wx)$ takes $\bigo{ \log^{\orderdim-2} \seqlen}$.

\section{Hyperparameter Settings} \label{app:hyperparam}
\subsection{Dependency Parsing} \label{app:hyperparam-dep}
For pretrained language models, we use \texttt{XLNet-large-cased}\footnote{\url{https://huggingface.co/xlnet-large-cased}} \cite{xlnet} for PTB, \texttt{bert-base-chinese}\footnote{\url{https://huggingface.co/bert-base-chinese}} for CTB, and \texttt{bert-base-multilingual-cased}\footnote{\url{https://huggingface.co/bert-base-multilingual-cased}} for UD.
We set the dimension of POS tag embedding to 256 for all experiments. 
On top of concatenated pretrained representations and POS embedding, we use a 3-layer BiLSTM \cite{lstm} with a hidden size of 768 for base-sized models (\texttt{bert-base-chinese} on CTB and \texttt{bert-multilingual-cased} on UD) and 1024 for large-sized models (\texttt{xlnet-large-cased} on PTB).
We apply dropout with a rate of 0.33 to the concatenated embedding layer, between LSTM layers, and before the linear projection layer of the realizer. 
We employ AdamW \cite{loshchilov2018decoupled} with a learning rate of $2\mathrm{e}{-5}$ for pretrained LMs and $1\mathrm{e}{-4}$ for POS embedding, BiLSTM, and linear projection during training. The gradient clipping threshold is set to $1.0$. The batch size for training is $32$. The number of training epochs is $50$.

\subsection{Coreference Resolution} \label{app:hyperparam-coref}
We use \texttt{longformer-large-cased}\footnote{\url{https://huggingface.co/allenai/longformer-large-4096}} 
 \cite{Beltagy2020Longformer} as the pretrained encoder.
 We use the same hyperparameter settings as \citet{kirstain-etal-2021-coreference}.
We use AdamW with a learning rate of $1\mathrm{e}{-5}$ for pretrained LM and $3\mathrm{e}{-4}$ for the linear projection during training, with 5600 linear warmup steps.
Training documents are batched into batches with maximum 5000 tokens in total. 
The number of training epochs is $129$.\looseness=-1

\section{Datasets} \label{app:datasets}

\subsection{Dependency Parsing} \label{app:dataset-dep}

\paragraph{Preprocessing.}
We follow previous work \cite{kiperwasser-goldberg-2016-simple,dozat-biaffine} to derive the dependency annotations from the treebank annotations using the Stanford Dependency converter v3.3.0 \cite{de2008stanford}. 
During evaluation, punctuations are omitted. 
Following \citet{hexa}, we provide gold part-of-speech tags to the model during training and decoding.

\paragraph{Splits.} 
The dataset splits are consistent with previous work. 
For PTB, we follow the standard split of \citet{marcus-etal-1993-building}, resulting in 39,832 sentences for training, 1,700 for development, and 2,416 for testing. 
For CTB, we follow the split of \citet{zhang-clark-2008-tale}, resulting in 16,091 sentences for training, 803 for development, and 1,910 for testing.
For UD, we follow previous work \cite{zhang-etal-2020-efficient,yang-tu-2022-headed} and use the standard splits of the following corpora for experiments:  BG-btb, CA-ancora, CS-pdt, DE-gsd, EN-ewt, ES-ancora, FR-gsd, IT-isdt, NL-alpino, NO-rrt, RO-rrt, RU-syntagrus.

\paragraph{Licenses.} 
The PTB and CTB datasets are licensed under LDC User Agreement. 
The UD dataset is licensed under the Universal Dependencies License Agreement.

\subsection{Coreference Resolution} \label{app:dataset-coref}

\paragraph{Preprocessing.}
We experiment on the CoNLL-2012 English shared task dataset \cite[OntoNotes;][]{pradhan-etal-2012-conll}. 
We follow the preprocessing procedure of \cite{kirstain-etal-2021-coreference}. 
During training and decoding, the speaker information is provided to the model.

\paragraph{Splits.}
The OntoNotes dataset contains 2,802 documents for training, 343 for validation, and 348 for testing.
We use this official split following previous work \cite{lee-etal-2017-end,kirstain-etal-2021-coreference}.

\paragraph{Licenses.}
The OntoNotes dataset is licensed under LDC User Agreement.

\section{Efficiency Evaluation} \label{app:efficiency-eval}

\subsection{Dependency Parsing}
For efficiency evaluation, \texttt{BERT-large-cased}\footnote{\url{https://huggingface.co/bert-large-cased}} is used as the pretrained encoder for our method with $\orderdim=2$, hexatagger \cite[\Hexataggertt{};][]{hexa}, and biaffine model (\texttt{Biaff}).
We use the English PTB test set and truncate or pad the input sentences to the control length. 
The results are averaged over 3 random runs on the same server with one Nvidia A100-80GB GPU. The other experimental settings are kept the same (i.e., the version of PyTorch and \texttt{transformers}, FP32 precision, batching).

\subsection{Coreference Resolution}
\begin{table}[ht]
\centering
\resizebox{0.65\columnwidth}{!}{%
\begin{tabular}{@{}ccccc@{}}\toprule
 & \multicolumn{2}{c}{Speed (doc/s) $\uparrow$} & \multicolumn{2}{c}{Memory (GB) $\downarrow$} \\
\cmidrule(lr){2-3}
\cmidrule(lr){4-5}
Doc length & \texttt{Ours} ($\orderdim=4$)  & \citeauthor{kirstain-etal-2021-coreference} & \texttt{Ours} ($\orderdim=4$) & \citeauthor{kirstain-etal-2021-coreference} \\ \midrule
512 & 72.5 & 35.7 & 7.3 & 7.4 \\
1024 & 54.3 & 26.7 & 7.3 &  7.4 \\
2048 & 33.8 & 15.9 & 9.4 &  9.5 \\
4096 & 19.3 & 8.6 & 17.8 &  21.0  \\ \midrule
 {overall} & \textbf{82.8} & 41.9  & \textbf{7.3} &  7.4 \\ \bottomrule
\end{tabular}}
\caption{Comparison of speed and memory consumption on OntoNotes test set using Longformer-base\footnotemark{} as pretrained encoder. 
Results are averaged over 3 random runs on the same server with one Nvidia A100-80GB GPU using BERT-large as encoder. We use a batch size of 32 documents.}
\label{tab:efficiency-ontonotes}
\end{table}
\footnotetext{\url{https://huggingface.co/allenai/longformer-base-4096}}

We compare the efficiency of our order-theoretic method with baseline coreference resolution model.
The full results are given in \Cref{tab:efficiency-ontonotes}.
On the OntoNotes coreference resolution benchmark, our method is twice as fast as \citeposs{kirstain-etal-2021-coreference} model while using less memory, especially on long documents.
It is worth noting that \citet{kirstain-etal-2021-coreference} has already performed aggressive optimization in both the speed and memory footprint of coreference modeling.
I.e., they abandon the computation for textual span representations and entity-pair representations, and use biaffine scorers to compute coreference scores.

\section{Additional Experimental Results}
\subsection{Dependency Parsing} \label{app:ud-extra}
We report additional experimental results on the UD dependency parsing dataset in \cref{tab:ud}. 
On average, our model has state-of-the-art performance and outperforms all other baseline models on 5 languages.

\begin{table}[h]
\centering
\resizebox{0.99\columnwidth}{!}{%
\begin{tabular}{lccccccccccccc}
        \toprule
            & 	bg   & 	 ca   &  cs   &  de   &  en  &  es  &  fr   &  it  & nl  & no  & ro  & ru   &  Avg. \\ \toprule
        \citet{zhang-etal-2020-efficient}   &  90.77   &  91.29   &  91.54   &  80.46   &  87.32   &  90.86   &  87.96   &  91.91   &  88.62   &  91.02   &  86.90   &  93.33   &  89.33 \\
        \citet{wang-tu-2020-second}   &  90.53   &  92.83   &  92.12   &  81.73   &  89.72   &  92.07   &  88.53   &  92.78   &  90.19   &  91.88   &  85.88   &  92.67   &  90.07   \\ 	\midrule
    \multicolumn{13}{c}{+$\text{BERT}_{\text{multilingual}}$}  \\  \midrule 
        \citet{wang-tu-2020-second}   &  {91.30}   &  93.60   &  92.09   &  82.00   &  90.75   &  92.62   &  89.32   &  93.66   &  91.21   &  91.74   &  86.40   &  92.61    &  90.61 \\ 
        \citet{dozat-biaffine}   &  90.30   &  \textbf{94.49}   &  {92.65}   &  \textbf{85.98}   &  91.13   &  93.78   &  \textbf{91.77}   & 94.72   & 91.04   & 94.21   & 87.24   &  \textbf{94.53}   & 91.82 \\ 
         \citet{yang-tu-2022-headed}   &  91.10   &  94.46   & 92.57   &  85.87   &  91.32   &  \textbf{93.84}   &  91.69   &  \textbf{94.78}   &  {91.65}   &  \textbf{94.28}   &  {87.48}   & 94.45   &  {91.96} \\
         \citet{hexa}   &  92.87   &  93.79   &  92.82   &  85.18   &  90.85   &  93.17   &  91.50   &  94.72   &  91.89 &  {93.95}   &  87.54   &  94.03   &  {91.86} \\ \midrule
         \texttt{ours} ($\orderdim=2$)   &  92.81 & 93.26 & 92.52 & 83.33 & 90.38 & 92.55 & 89.83 & 93.82 & 91.29 & 93.61 & 87.40 & 94.10 & 91.24 \\
         \texttt{ours} ($\orderdim=4$)   &  \textbf{93.82} & 94.23 & \textbf{93.03} & 84.68 & \textbf{91.40} & 93.62 & 90.95 & 94.59 & \textbf{92.58} & 94.22 & \textbf{88.45} & 94.40  & \textbf{92.16} \\ \bottomrule
\end{tabular}}
\caption{LAS scores on the test sets of 12 languages in UD. Our method with an order dimension of $\orderdim=4$ achieves competitive performance in all languages, being state-of-the-art on 5 languages and on average.}
\label{tab:ud}
\end{table}

\section{Qualitative Examples}  \label{app:qual-ex}
We present some qualitative examples from the PTB development set and one non-projective example using our method with $a$ 2-dimensional realizer, with their ground truth annotations on the right in \crefrange{fig:qual-ex-20}{fig:qual-ex-0}.
For a more intuitive and compact exhibition, we plot the 2 total orders output by our model in a 2-dimensional plane.
Each axis corresponds to one of the 2 orders.
The relation $\wx \rel \wy$ encoded by $\bigwedge_{k\in \{1,2\}} \rankfunck(\wx) < \rankfunck(\wy)$ is equivalent to $\wx$ being located \emph{below and to the left} of $\wy$.

Tokens in $\verticesr$ and $\verticesb$ are represented by \redcross{} and \bluecircle, respectively.
The line segments between \redcross{} and \bluecircle{} are the extracted dependency relations.
In each of the plots, every \redcross{} (token in $\verticesr$) except for the root is connected to a \bluecircle{} (token in $\verticesb$), which indicates \redcross{} is the modifier of \bluecircle{}.
The roots (\textit{about, moving, ready, had, adds, bought} represented by \redcross) are not connected to any other word.

\begin{figure}[ht]
    \centering
    \begin{subfigure}{0.51\textwidth}
        \includegraphics{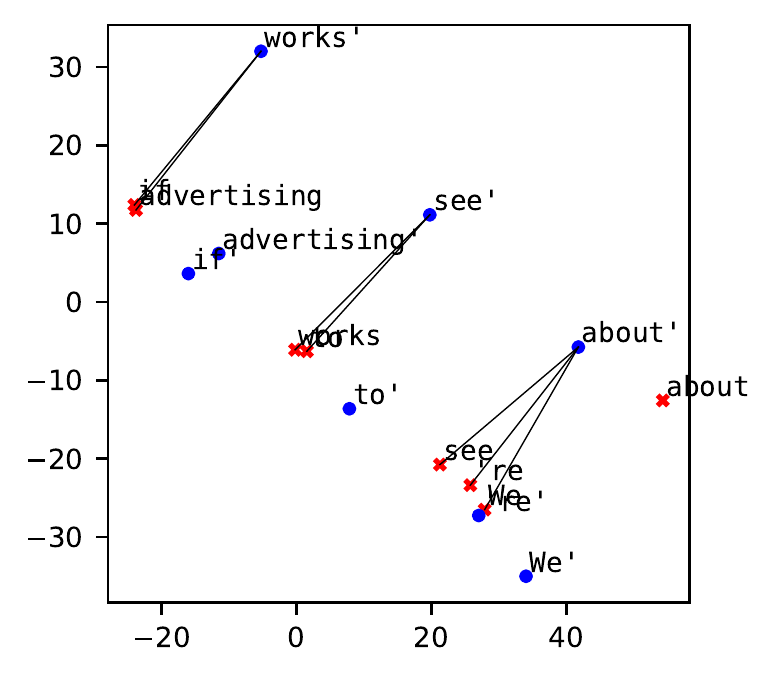}
    \end{subfigure}
    \begin{subfigure}[b]{0.45\textwidth}
        \begin{dependency}[theme=simple,edge horizontal padding=3pt]
        \begin{deptext}[column sep=0.1em,font=\small]
        We \& 're \& about \& to \& see \& if \& advertising \& works \\
        \end{deptext}
        \depedge{3}{1}{}
        \depedge{3}{2}{}
        \deproot{3}{}
        \depedge{5}{4}{}
        \depedge{3}{5}{}
        \depedge{8}{6}{}
        \depedge{8}{7}{}
        \depedge{5}{8}{}
        \end{dependency}
    \end{subfigure}
    \caption{\textit{We 're about to see if advertising works}}
    \label{fig:qual-ex-20}
\end{figure}

\begin{figure}[ht]
    \centering
    \begin{subfigure}{0.5\textwidth}
        \includegraphics{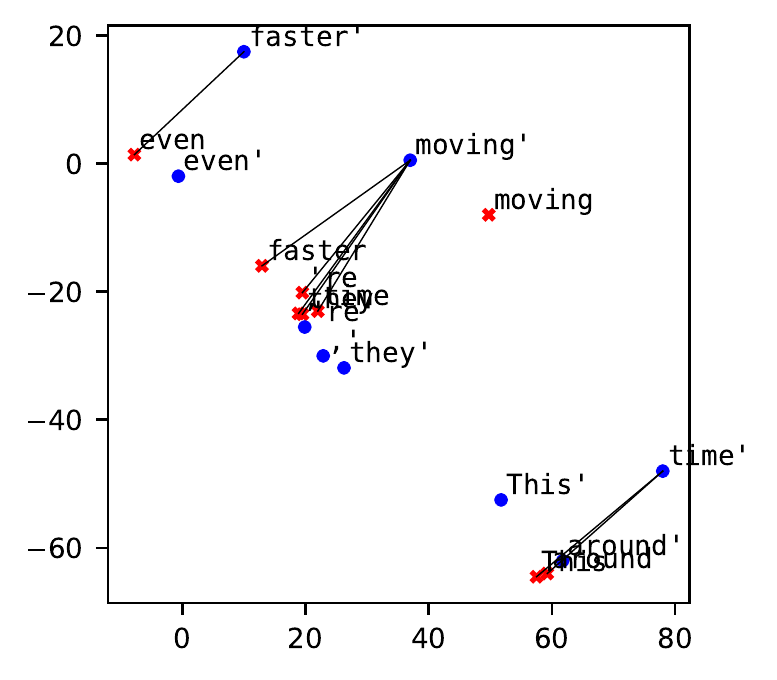}
    \end{subfigure}
    \begin{subfigure}[b]{0.45\textwidth}
        \begin{dependency}[theme=simple,edge horizontal padding=3pt]
        \begin{deptext}[column sep=0.05em,font=\small]
        This \& time \& around \& , \& they \& 're \& moving \& even \& faster \\
        \end{deptext}
        \depedge{2}{1}{}
        \depedge{7}{2}{}
        \depedge{2}{3}{}
        \depedge{7}{4}{}
        \depedge{7}{5}{}
        \depedge{7}{6}{}
        \deproot{7}{}
        \depedge{9}{8}{}
        \depedge{7}{9}{}
        \end{dependency}
    \end{subfigure}
    \caption{\textit{This time around , they 're moving even faster }}
    \label{fig:qual-ex-24}
\end{figure}

\begin{figure}[ht]
    \centering
    \begin{subfigure}{0.51\textwidth}
        \includegraphics{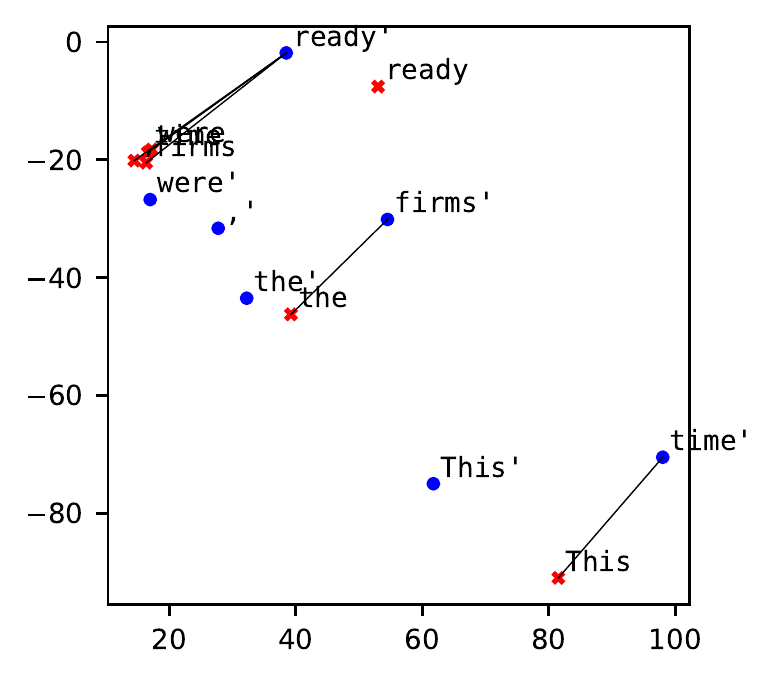}
    \end{subfigure}
    \begin{subfigure}[b]{0.45\textwidth}
        \begin{dependency}[theme=simple,edge horizontal padding=3pt]
        \begin{deptext}[column sep=0.1em,font=\small]
        This \& time  \& , \& the \& firms \& were \& ready  \\
        \end{deptext}
        \depedge{2}{1}{}
        \depedge{7}{2}{}
        \depedge{7}{3}{}
        \depedge{5}{4}{}
        \depedge{7}{5}{}
        \depedge{7}{6}{}
        \deproot{7}{}
        \end{dependency}
    \end{subfigure}
    \caption{\textit{This time , the firms were ready }}
    \label{fig:qual-ex-31}
\end{figure}

\begin{figure}[ht]
    \centering
    \begin{subfigure}{0.51\textwidth}
        \includegraphics{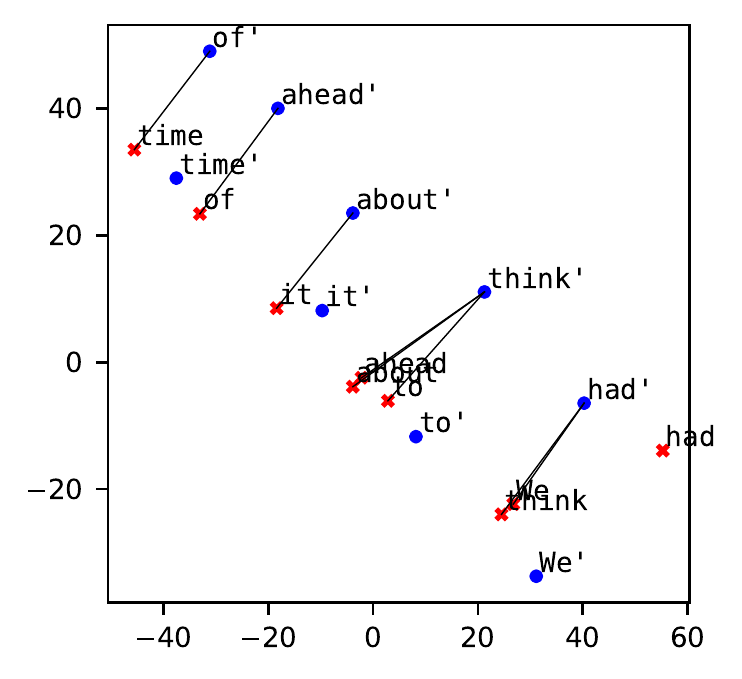}
    \end{subfigure}
    \begin{subfigure}[b]{0.45\textwidth}
        \begin{dependency}[theme=simple,edge horizontal padding=3pt]
        \begin{deptext}[column sep=0.1em,font=\small]
        We \& had  \& to \& think \& about \& it \& ahead \& of \& time  \\
        \end{deptext}
        \depedge{2}{1}{}
        \deproot{2}{}
        \depedge{4}{3}{}
        \depedge{2}{4}{}
        \depedge{4}{5}{}
        \depedge{5}{6}{}
        \depedge{4}{7}{}
        \depedge{7}{8}{}
        \depedge{8}{9}{}
        \end{dependency}
    \end{subfigure}
    \caption{\textit{We had to think about it ahead of time }}
    \label{fig:qual-ex-39}
\end{figure}

\begin{figure}[ht]
    \centering
    \begin{subfigure}{0.51\textwidth}
        \includegraphics{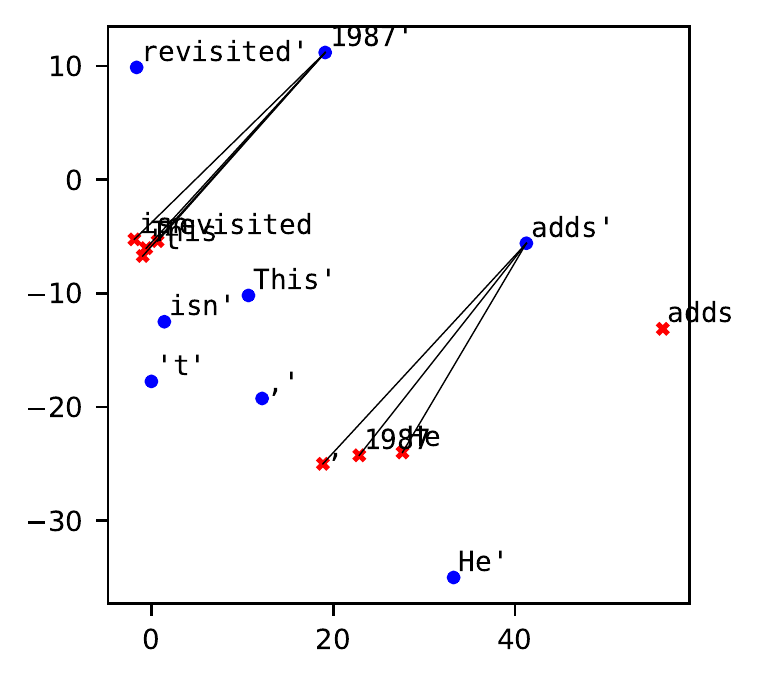}
    \end{subfigure}
    \begin{subfigure}[b]{0.45\textwidth}
        \begin{dependency}[theme=simple,edge horizontal padding=3pt]
        \begin{deptext}[column sep=0.1em,font=\small]
        He \& adds  \& , \& This \& isn \& 't \& 1987 \& revisited   \\
        \end{deptext}
        \depedge{2}{1}{}
        \deproot{2}{}
        \depedge{2}{3}{}
        \depedge{7}{4}{}
        \depedge{7}{5}{}
        \depedge{7}{6}{}
        \depedge{2}{7}{}
        \depedge{7}{8}{}
        \end{dependency}
    \end{subfigure}
    \caption{\textit{He adds , " This isn 't 1987 revisited "
}}
    \label{fig:qual-ex-54}
\end{figure}

\begin{figure}[ht]
    \centering
    \begin{subfigure}{0.51\textwidth}
        \includegraphics{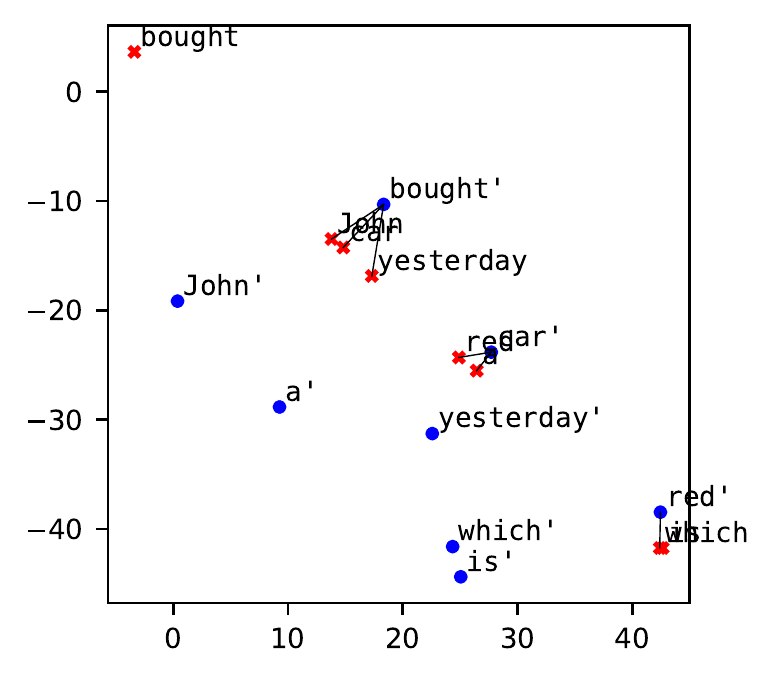}
    \end{subfigure}
    \begin{subfigure}[b]{0.45\textwidth}
        \begin{dependency}[theme=simple,edge horizontal padding=3pt]
        \begin{deptext}[column sep=0.1em,font=\small]
        John \& bought  \& a \& car \& yesterday \& which \& is \& red   \\
        \end{deptext}
        \depedge{2}{1}{}
        \deproot{2}{}
        \depedge{4}{3}{}
        \depedge{2}{4}{}
        \depedge{2}{5}{}
        \depedge{8}{6}{}
        \depedge{8}{7}{}
        \depedge{4}{8}{}
        \end{dependency}
    \end{subfigure}
    \caption{A sentence with a non-projective dependency structure: \textit{John bought a car yesterday which is red}}
    \label{fig:qual-ex-0}
\end{figure}

\end{document}